\newtheorem{theorem}{Theorem}[section]
\newtheorem{lemma}[theorem]{Lemma}
\crefname{section}{Sec.}{Secs.}
\Crefname{section}{Section}{Sections}
\Crefname{table}{Table}{Tables}
\crefname{table}{Tab.}{Tabs.}
\begin{document}

\title{Adapting Step-size: A Unified Perspective to Analyze and Improve Gradient-based Methods for Adversarial Attacks}

\newcommand*\samethanks[1][\value{footnote}]{\footnotemark[#1]}
\author{Wei Tao\thanks{Equal contribution}\\
Academy of Military Science\\
{\tt\small wtao\_plaust@163.com}
\and
Lei Bao\samethanks,  Qing Tao\thanks{Corresponding author}\\
Army Academy of Artillery and Air Defense\\
{\tt\small baolei1213@sina.com, taoqing@gmail.com}
\and
Sheng Long\\
National University of Defense Technology\\
{\tt\small longsheng@nudt.edu.cn}
\and
Gaowei Wu\\
Chinese Academy of Sciences\\
{\tt\small gaowei.wu@ia.ac.cn}
}
\maketitle

\begin{abstract}
Learning adversarial examples can be formulated as an optimization problem of maximizing the loss function with some box-constraints. However, for solving this induced optimization problem, the state-of-the-art gradient-based methods such as FGSM, I-FGSM and MI-FGSM look different from their original methods especially in updating the direction, which makes it difficult to understand them and then leaves some theoretical issues to be addressed in viewpoint of optimization. In this paper, from the perspective of adapting step-size, we provide a unified theoretical interpretation of these gradient-based adversarial learning methods. We show that each of these algorithms is in fact a specific reformulation of their original gradient methods but using the step-size rules with only current gradient information. Motivated by such analysis, we present a broad class of adaptive gradient-based algorithms based on the regular gradient methods, in which the step-size strategy utilizing information of the accumulated gradients is integrated. Such adaptive step-size strategies directly normalize the scale of the gradients rather than use some empirical operations. The important benefit is that convergence for the iterative algorithms is guaranteed and then the whole optimization process can be stabilized. The experiments demonstrate that our AdaI-FGM consistently outperforms I-FGSM and AdaMI-FGM remains competitive with MI-FGSM for black-box attacks.
\end{abstract}

\section{Introduction}
\label{sec:intro}
For quite a few years, deep learning has achieved widespread empirical success in many fields especially in computer vision \cite{krizhevsky2012imagenet} and natural language procession \cite{Devlin2019BERTPO}. In spite of this success, evidence has shown that deep neural networks are challenged by their vulnerability to adversarial attacks \cite{Szegedy2014IntriguingPO}, where adversaries craft a fraction of the training data by adding human-imperceptible perturbations to subvert the learning process. 
One interesting phenomenon is that adversarial examples can often transfer from one model to another. The fact behind this transferability is that different learning models learn almost similar classifiers, making it possible to attack models without knowing their structure and parameters \cite{Szegedy2014IntriguingPO, Dong2018BoostingAA}.

For a given learning model, generating adversarial examples can be formulated as an optimization problem of maximizing the loss function with box-constraints. To solve this induced optimization problem, many gradient-based methods have been developed. Typical instances include FGSM \cite{Goodfellow2015ExplainingAH}, I-FGSM \cite{Kurakin2017AdversarialEI}, MI-FGSM \cite{Dong2018BoostingAA} and NI-FGSM \cite{Lin2020NesterovAG}. Among them, FGSM and I-FGSM are directly motivated by the gradient descent strategy with one or more iterative steps, and MI-FGSM and NI-FGSM are established upon Polyak's heavy-ball (HB) method \cite{Polyak1964SomeMO} and Nesterov's accelerated gradient (NAG) method \cite{nesterov27method}.

As pointed out in \cite{Dong2018BoostingAA}, momentum, which utilize the accumulation of past gradients, can stabilize the update directions and escape from poor local maxima. In practice, it has been shown that the adversarial examples generated by MI-FGSM \cite{Dong2018BoostingAA} and NI-FGSM \cite{Lin2020NesterovAG} have higher success rates in both white-box and black-box attacks. Specifically, with MI-FGSM, they won the first places in NIPS 2017 Non-targeted Adversarial Attack and Targeted Adversarial Attack competitions \cite{Dong2018BoostingAA}. Such a fact further reveals that different optimization methods may have different effects on some learning tasks, which has already been shown in sparse learning problems \cite{Xiao2009DualAM}.

While it now becomes a common viewpoint that the field of adversarial attacks and defenses is dominated by gradient-based approaches, there are still some theoretical problems worthy of further study. First, once an optimization algorithm is concerned, its convergence, which provides theoretical guarantees for stability of the whole iterative process, may be one of the most important issues. The convergence analysis of a gradient-based algorithm heavily depends on the selection of its update direction and step-size rule \cite{bertsekas2003convex}. Nevertheless, FGSM empirically uses the gradient-sign as its update direction rather than the real gradient. Despite the fact that using sign function can normalize the scale of the gradients and restrict generated adversarial example to be within the constrained domain, FGSM and I-FGSM solve the constrained problems in a heuristic manner and they no longer have the property of greedy ascent. Inevitably, the convergence of I-FGSM needs to be restudied. Secondly, besides continuing to use the sign of past gradients accumulation as its iterative directions, both MI-FGSM and NI-FGSM employ $l_1$-norm to normalize the scale of the gradients in different iterations that varies in magnitude. Due to these differences from the regular momentum methods, even for convex objective functions, it becomes unclear if the benefit of accelerated convergence \cite{nesterov27method, Tao2020TheSO, Tao2021TheRO} and  avoiding local minima \cite{sun2019heavy} still be kept. Finally, note that each gradient-based method for adversarial attacks is a variant of one regular optimization algorithm. However, so far, there still lacks explicit analysis of the relationship between these variants and the induced optimization problems.

From the perspective of optimization, several theoretical issues should be argued. First of all, \emph{ what is the relationship between gradient-based attack methods and their original gradient algorithms?} At the first sight, they even have different update directions. On the other hand, due to the fact that the attack learning is described as an optimization problem, \emph{why don't we directly employ the gradient or momentum methods?} Furthermore, note that the sign function plays an important part in the normalization. However, this operation looks empirical in regular optimization algorithms. \emph{Can we have some other way to normalize the scale of gradients while keeping convergence of the algorithms?} Finally, as information of past gradients is only used to stabilize the update directions, \emph{ can we use such information to further stabilize the whole process of optimization?} The motivation of this paper is to address these issues with a unified perspective and our key idea is to utilize the adaptive step-size strategy.

Adaptive step-size have proved effective in overcoming the limitation of the optimization methods that each element in the update direction is uniformly scaled. The first algorithm along this line is AdaGrad \cite{Duchi2010AdaptiveSM}. By using a sum of squared past gradient values, AdaGrad re-scales each coordinate of the gradient. What is more, AdaGrad enjoys the same convergence rate as vanilla SGD but with a smaller factor in sparse learning problems. Unfortunately, experiments have demonstrated that it under-performed when applied to the training deep neural networks \cite{Wilson2017TheMV}. This is because the large impact of past gradients in the adaptive strategy prevents it from adapting to local changes in the smoothness of the function. Practical experience has led to the development of adaptive methods that is able to emphasize the more recent gradients. To do this, RMSProp \cite{tieleman2012lecture} uses an exponential moving average (EMA) to replace a cumulative sum to forget past gradients. So far, EMA has been regarded as a commonly-used technique to deal with accumulated information. Adam \cite{Kingma2015AdamAM}, currently one of the most popular training algorithms in deep neural networks, uses EMA in the update of both step-size and direction.

There have been several reports about the direct application of adaptive gradient-based methods in solving adversarial attack optimization problems. For example, a variant of Adam was proposed to generate indistinguishable adversarial examples with high transferability \cite{Zou2022MakingAE}. In \cite{Yang2022AdversarialEG}, Adabelief optimizer \cite{Zhuang2020AdaBeliefOA} is introduced to improve the transferability of adversarial examples. In this paper, we first use specific step-size strategies to analyze the connection between gradient-based attack methods and their original gradient algorithms. Then, we obtain a broad class of adaptive gradient-based adversarial learning algorithms by adapting the step-size with information of the accumulated gradients. In contrast to the investigations in \cite{Zou2022MakingAE} and \cite{Yang2022AdversarialEG}, the adaptive methods here (Section 3.2) are established upon our analysis (Section 3.1). The contributions in this paper can be summarized as follows,
\begin{itemize}
\item  We provide a unified theoretical interpretation of FGSM, I-FGSM and MI-FGSM from the viewpoint of adapting step-size. We show that each of these algorithms is in fact a specific reformulation of their original gradient methods but using the step-size rules with only current gradient information.
\item We present AdaI-FGM and AdaMI-FGM to improve the available algorithms, in which the accumulated gradients is further used to adapt the step-sizes of regular gradient-based methods. It not only normalizes the scale of the gradients but also stabilizes the whole optimization process. The experiments demonstrate that adapting step-size with the accumulated gradients can remarkably improves the adversarial attacks.
\item The experiments also illustrate that the derived AdaMI-FGM remains competitive with MI-FGSM for black-box attacks. This will inspire us to introduce more adaptive and regular optimization algorithms to the field of adversarial attacks. We hope that the proposed adaptive step-size strategy can serve as a general and effective technique to boost the transferability and stability of available gradient-based methods.
\end{itemize}

\section{Related work}
\label{sec:Related Work}
In this section, we first describe the optimization problem for adversarial attacks and then provide a brief overview of several typical gradient-based attack methods and adaptive gradient methods.

\subsection{Optimization problems}

Let $\mathcal{S}=\{(\bm{x}_1, y_1), \dots, (\bm{x}_m, y_m)\}$ be a training set, where $y_i$ is the label of $\bm{x}_i \in \mathbb{R}^{d} $ and the sample $(\bm{x}_{i},y_{i})$ is uniformly random chosen from a distribution $\mathcal{D}$.

Adversarial training can be formulated as a {\it min-max} optimization problem \cite{Madry2018TowardsDL, Wang2019OnTC}
\begin{equation}\label{adav-training}
\min_{\bm{\theta}}\mathbb{E}_{(\bm{x},y)\thicksim \mathcal{D}} \max_{\bm{x}^{adv} \in \mathcal{B}_{\epsilon}(\bm{x})} J(\bm{x}^{adv},y),
\end{equation}
where $f_{\bm{\theta}}(\bm{x}):\bm{x} \in \bm{X} \subseteq \mathbb{R}^{d} \rightarrow y \in Y$ is a classifier with parameters $\bm{\theta}$, $\mathcal{B}_{\epsilon}(\bm{x})=\{\bm{x}^{adv}: \|\bm{x}^{adv}-\bm{x}\|_p \leq \epsilon\}$ and $J(\bm{x},y)$ is the loss function. Throughout this paper, we use the cross-entropy loss and assume that $J(\bm{x},y)$ is differentiable $w. r. t.$ $\bm{x}$.

In contrast to adversarial training, on the attack side, we are given a classifier $f_{\bm{\theta}}$ with a predefined $\bm{\theta}$. Generating a non-targeted adversarial example $\bm{x}^{adv}$ from
a real example $\bm{x}$ can be formulated as the following optimization problem \cite{Madry2018TowardsDL, Goodfellow2015ExplainingAH},
\begin{equation}\label{adv-optimization}
\max J(\bm{x}^{adv},y), \ s. t. \  \|\bm{x}^{adv}-\bm{x}\|_p \leq \epsilon.
\end{equation}

Obviously, optimization problem (\ref{adv-optimization}) coincides with our intuition, i.e., adversarial attack is to find an example $\bm{x}^{adv}$ that misleads the model prediction $ f_{\bm{\theta}}(\bm{x}^{adv})\neq y$ while the $l_p$-norm of the adversarial perturbation $\|\bm{x}^{adv}-\bm{x}\|_p$ should be restricted to a threshold $\epsilon$. Alternatively, learning adversarial examples can also be described as a {\it regularized} optimization problem \cite{Carlini2017TowardsET}
$$
\min \lambda\|\bm{x}^{adv}-\bm{x}\|_p- J(\bm{x}^{adv},y),
$$
where $\lambda$ is the trade-off parameter. In this paper, we only focus on adversarial attack optimization problem (\ref{adv-optimization}) with $p=\infty$.

\subsection{Gradient-based attack methods}
To solve optimization problem (\ref{adv-optimization}), many gradient-based attack methods have been developed.

FGSM \cite{Goodfellow2015ExplainingAH} is one of the most basic gradient-based attack methods. It has only one-step update, which is described as follows,
\begin{equation}\label{FGSM}
\bm{x}^{adv}=\bm{x}+\epsilon \ \emph{sign}(\nabla_{\bm{x}} J(\bm{x},y)),
\end{equation}
where $\nabla_{\bm{x}}J(\bm{x},y)$ is the gradient of $J(\bm{x},y)$ $w. r. t.$ $\bm{x}$ and $sign(\cdot)$ is the sign function. From (\ref{FGSM}), it is easy to find $\|\bm{x}^{adv}-\bm{x}\|_\infty \leq \epsilon$.

I-FGSM \cite{Kurakin2017AdversarialEI} is in fact a FGSM with multiple iterative steps, which key operation is
\begin{align}
\bm{x}_{t+1}^{adv}= \bm{x}_{t}^{adv} + \alpha \ \emph{sign}(\nabla_{\bm{x}} J(\bm{x}_{t}^{adv},y))\label{I-FGSM},
\end{align}
where $\bm{x}^{adv}_0 = \bm{x}$. Unlike FGSM, some operations should be added in I-FGSM to restrict each $\bm{x}_{t}^{adv}$ to satisfy $\|\bm{x}_{t}^{adv}-\bm{x}\|_\infty \leq \epsilon$. To do this, one can either set the step-size $\alpha=\epsilon/T$ with $T$ being the total number of iterations or use a $Clip_{\bm{x}}^\epsilon\{\cdot\}$ function. Specifically, for an image $\bm{x}^{adv}=(x^{adv}_1,x^{adv}_2,x^{adv}_3)$ which is typically 3-D tensor, its clip operation is \cite{Kurakin2017AdversarialEI}
\begin{equation}\label{clip}
\begin{aligned}
& Clip_{\bm{x}}^\epsilon (\bm{x}^{adv}(x^{adv}_1,x^{adv}_2,x^{adv}_3))\\
& =\min\{255, \bm{x}(x_1,x_2,x_3)+\epsilon,\\
& max\{0, \bm{x}(x_1,x_2,x_3)-\epsilon, \bm{x}^{adv}(x^{adv}_1,x^{adv}_2,x^{adv}_3)\}\}.
\end{aligned}
\end{equation}

Compared with FGSM, I-FGSM has a higher success rate for white-box attacks but at the cost of worse transferability, which means that I-FGSM is less effective in black-box environments \cite{Kurakin2017AdversarialEI}.

MI-FGSM \cite{Dong2018BoostingAA} integrates HB momentum \cite{Polyak1964SomeMO} into the iterative step of I-FGSM. The update procedure of MI-FGSM is
\begin{equation}\label{MI}
\left  \{
\begin{array}{l}
\bm{g}_{t+1}=\mu \ \bm{g}_t  +  \frac{\nabla_{\bm{x}} J(\bm{x}_{t}^{adv},y)}{\|\nabla_{\bm{x}} J(\bm{x}_{t}^{adv},y)\|_1}\\
\bm{x}_{t+1}^{adv}=Clip_{ \bm{x}}^\epsilon  \{\bm{x}_{t}^{adv} + \alpha \ \emph{sign}(\bm{g}_{t+1})\}
\end{array},
\right.
\end{equation}
where $\mu$ is the decay factor with $\bm{g}_0=0$.

Similarly, the NAG momentum \cite{nesterov27method} can also be intergraded into I-FGSM. The iterative version of NI-FGSM \cite{Lin2020NesterovAG} is formulated as
\begin{equation}\label{NI}
\left  \{
\begin{array}{l}
\bm{x}^{nes}_t= \bm{x}^{adv}_t  +\alpha  \bm{g}_t \\
\bm{g}_{t+1}=\mu \ \bm{g}_t  + \frac{\nabla_{\bm{x}} J(\bm{x}_{t}^{nes},y)}{\|\nabla_{\bm{x}} J(\bm{x}_{t}^{nes},y)\|_1} \\
\bm{x}_{t+1}^{adv}=Clip_{\bm{x}}^\epsilon  \{\bm{x}_{t}^{adv} + \alpha \ \emph{sign}(\bm{g}_{t+1})\}
\end{array}.
\right.
\end{equation}

Compared with FGSM and I-FGSM, the update direction $\bm{g}_{t}$ in (\ref{MI}) and (\ref{NI}) now becomes the accumulated gradients. With such accumulation, the momentum methods can stabilize their iterative direction $ \emph{sign}(\bm{g}_{t+1})$. As pointed out in \cite{Dong2018BoostingAA}, the transferability of adversarial examples is boosted.

\subsection{Adaptive gradient-based methods}

Generally speaking, projected gradient descent (PGD) is one of the most fundamental algorithms for dealing with constrained minimization problem. For solving (\ref{adv-optimization}), its iteration becomes
\begin{equation}\label{PGD}
\bm{x}^{adv}_{t+1}= P_{\mathbf{Q}}[\bm{x}^{adv}_{t} + \alpha_t \nabla_{\bm{x}} J(\bm{x}^{adv}_{t},y)],
\end{equation}
where $\alpha_t>0$ is the time-varying step-size and $P_{\mathbf{Q}}({\cdot})$ is the projection operator on the closed convex set $\mathbf{Q}=\{\bm{z} \in \mathbb{R}^{d}: \| \bm{z}-\bm{x}\|_\infty \leq \epsilon\}$ \cite{bertsekas2003convex}.

Simply speaking, AdaGrad \cite{Duchi2010AdaptiveSM, wang2019sadam} takes the form of
\begin{equation}\label{AdaGrad}
 \bm{x}^{adv}_{t+1}=P_{\mathbf{Q}}[\bm{x}^{adv}_{t}+\displaystyle\frac{\alpha} {\sqrt{t}}V_{t}^{-\frac{1}{2}} \nabla_{\bm{x}} J(\bm{x}^{adv}_{t},y)],
\end{equation}
where $V_{t}$ is a $d \times d$ diagonal matrix and
\begin{equation}\label{arithmetic average}
v_{t,i}=\frac{\sum_{j=1}^{t}{[ \nabla_{\bm{x}} J(\bm{x}^{adv}_{j},y)]_{i}^{2}}}{t}
\end{equation}
is the arithmetic average of the square of the $i$-th elements of the past gradients. Obviously, the seldom-updated weights are updated with a larger step size than the frequently-updated weights. For this reason, the adaptive mechanism is well-suited for sparse learning problems.

RMSProp \cite{tieleman2012lecture} replaces the arithmetic average procedure (\ref{arithmetic average}) in AdaGrad (\ref{AdaGrad}) with EMA, i.e.,
\begin{equation}\label{EMA}
V_{t}=\beta V_{t-1}+(1-\beta)\text{diag}(\nabla_{\bm{x}}J(\bm{x}^{adv}_{t},y) \nabla_{\bm{x}}J(\bm{x}^{adv}_{t},y)^{T}),
\end{equation}
where $diag(\cdot)$ denotes extracting the diagonal matrix and $0 \leq \beta\leq 1$. With EMA, the weights assigned to past gradients decay exponentially so that the reliance of the update is essentially limited to recent few gradients. The exponential discount factor $\beta$ controls how slowly the momentum buffer is updated.

\section{Adaptive methods for adversarial attacks}

In this section, we first analyze the connection between the gradient-based attack methods and their original gradient algorithms. Based upon such analysis, we will propose several adaptive algorithms for adversarial attacks.

\subsection{Analyzing gradient-based attack methods}

We first consider the relationship between I-FGSM (\ref{I-FGSM}) and PGD (\ref{PGD}).

We indicate that $Clip_{\bm{x}}^\epsilon(\bm{x}^{adv})$ in (\ref{I-FGSM}), (\ref{MI}) and (\ref{NI}) is a projection of $\bm{x}^{adv}$ on a specific $\mathbf{Q}$. For example, when an image is described as 3-D tensor, $\mathbf{Q}$ is naturally set to $\{\bm{z}: \| \bm{z}-\bm{x}\|_\infty \leq \epsilon\}\bigcap [0,255]^3$. Note that clipping all the coordinates of $\bm{x}^{adv}$ to be within the box $\{\bm{z}: \| \bm{z}-\bm{x}\|_\infty \leq \epsilon\}$ by one step PGD has also been discussed in \cite{Carlini2017TowardsET} and \cite{Madry2018TowardsDL}.

In general cases, $\emph{sign}(\bm{g}_{t})\neq \bm{g}_{t}$. This means that I-FGSM (\ref{I-FGSM}) and PGD (\ref{PGD}) have different update directions. However, if we set
\begin{equation}\label{not-historical}
v_{t,i}=\frac{[{ \nabla_{\bm{x}} J(\bm{x}^{adv}_{t},y)]_{i}}^{2}}{t},
\end{equation}
AdaGrad (\ref{AdaGrad}) will become I-FSGM (\ref{I-FGSM}). This reveals that I-FSGM is in fact a specific PGD but using different step-size rules. As the gradient-sign is now used in almost all the gradient-based algorithms such as FGSM and MI-FGSM, the above analysis is also applicable to these algorithms.

In contrast to resetting a specific step-size rule (\ref{not-historical}) in I-FGSM, the analysis of MI-FGSM (\ref{MI}) is more complex. This is because the momentum parameter shoud also be considered.

It should be pointed out that the regular HB momentum method \cite{Tao2021TheRO} for constrained optimization problem (\ref{adv-optimization}) is
\begin{equation}\label{constrained HB}
\bm{x}_{t+1}^{adv}= P_{\mathbf{Q}} [\bm{x}_{t}^{adv} + \alpha_t  \nabla_{\bm{x}} J(\bm{x}_{t}^{adv},y) +\mu (\bm{x}_{t}^{adv}-\bm{x}_{t-1}^{adv})].
\end{equation}

For an optimization problem without constraints, HB (\ref{constrained HB}) can be rewritten as a two-steps algorithm
\begin{equation}\label{two-step}
\left  \{
\begin{array}{l}
\bm{g}_{t+1}=\mu \ \bm{g}_t  + \alpha_t \ \nabla_{\bm{x}} J(\bm{x}_{t}^{adv},y)\\
\bm{x}_{t+1}^{adv}=\bm{x}_{t}^{adv} +  \bm{g}_{t+1}
\end{array}.
\right.
\end{equation}

HB in the form of (\ref{two-step}) is popularly used in many deep learning references such as \cite{Ruder2016AnOO}.  Although we can set $\alpha_t=\frac{1}{\|\nabla_{\bm{x}} J(\bm{x}_{t}^{adv},y)\|_1}$, the update direction of two-steps HB (\ref{two-step}) is $\bm{g}_{t+1}$, which is still different from the update direction $\emph{sign}(\bm{g}_{t+1})$ in MI-FSGM (\ref{MI}). Like that in the analysis of I-FGSM, we introduce a diagonal matrix $V_{t}$ in (\ref{two-step}) and obtain an extended version of HB (\ref{two-step})
\begin{equation}\label{two-step-2}
\left  \{
\begin{array}{l}
\bm{g}_{t+1}=\mu \ \bm{g}_t  + \alpha_t \ \nabla_{\bm{x}} J(\bm{x}_{t}^{adv},y)\\
\bm{x}_{t+1}^{adv}=\bm{x}_{t}^{adv} + V_{t} \bm{g}_{t+1}
\end{array}.
\right.
\end{equation}

We can further set $v_{t,i}=\frac{1}{|[\bm{g}_{t+1}]_{i}|}$. Now, (\ref{two-step-2}) coincides with MI-FSGM (\ref{MI}), i.e., MI-FSGM (\ref{MI}) is a specific form of an extended version of HB (\ref{two-step-2}) but using different step-size $\alpha_t$ and $V_t$ from commonly used  HB (\ref{two-step}).

However, the step-size rule (\ref{not-historical}) only employs the current gradient information $\nabla_{\bm{x}} J(\bm{x}_{t}^{adv},y)$. On the other hand, as the specific step-size rule (\ref{not-historical}) is different from that in regular algorithms, even for convex objective functions, we don't know whether (\ref{AdaGrad}) and (\ref{two-step-2}) are convergent or not. Thus, the stability of the whole optimization process can not be guaranteed. Further, if the algorithm (\ref{two-step-2}) has the property of accelerating convergence and avoiding local maxima should be restudied. Finally, we don't know if $J(\bm{x}_{t}^{adv},y)$ is convergent to $ \max J(\bm{x}^{adv},y)$ on $  \|\bm{x}^{adv}-\bm{x}\|_p \leq \epsilon$.

\subsection{AdaI-FGM and AdaMI-FGM}

Motivated by the above analysis and successful using of the accumulated gradients in MI-FGSM, we propose a broad class of adaptive gradient-based methods, in which the step-sizes are updated by using the accumulated gradients. As the adaptive step-size can normalize the scale of the gradients, the sign function will no longer be used.

Specifically, the detailed steps of our adaptive I-FGM  are shown in Algorithm \ref{alg:AdaI-FGM}.
\begin{algorithm}
\caption{AdaI-FGM}
\label{alg:AdaI-FGM}
    \begin{algorithmic}[1]
    \Require
    A target classifier $f$ with loss function $J$, a benign image $\bm{x}$ with its ground-truth label $y$.
    \Require
    The perturbation size $\epsilon$, step-size parameter $\alpha>0$, constant $\delta > 0$, EMA parameter $\beta>0$ and total number of iterations $T$.
     \State Initialize $\bm{x}^{adv}_0=\bm{x}$ and $V_0 = \mathbf{0}_{d\times d}$.
    \Repeat
    \State Update $V_t$ by Eq. (\ref{arithmetic average}) or Eq. (\ref{EMA}).
    \State $\hat{V_t} = V_{t}^{\frac{1}{2}}+\displaystyle\frac{\delta }{\sqrt{t}}I_d$.
    \State $\bm{x}_{t+1}^{adv}= P_{\mathbf{Q}} [\bm{x}_{t}^{adv} + \displaystyle\frac{\alpha}{\sqrt t} \hat{V_t}^{-1} \nabla_{\bm{x}}J(\bm{x}^{adv}_{t},y)].$
    \Until {$t = T$}
    \Ensure  $\bm{x}_{T}^{adv}$.
    \end{algorithmic}
\end{algorithm}

According to the analysis in Section 3.1, we can get a fast gradient sign method from the constrained HB momentum method (\ref{constrained HB}), i.e.,
\begin{equation}\label{HBI}
\bm{x}_{t+1}^{adv}= P_{\mathbf{Q}} [\bm{x}_{t}^{adv} + \alpha sign (\nabla_{\bm{x}} J(\bm{x}_{t}^{adv},y)) +\mu (\bm{x}_{t}^{adv}-\bm{x}_{t-1}^{adv})].
\end{equation}

Naturally, we call this algorithm HBI-FGSM. Similar to I-FGSM, HBI-FGSM only uses the current gradient to adapt the step-size. Based on HBI-FGSM, the detailed steps of our adaptive MI-FGM are described in Algorithm \ref{alg:AdaMI-FGM}.

\begin{algorithm}
\caption{AdaMI-FGM}
\label{alg:AdaMI-FGM}
\begin{algorithmic}[1]
    \Require
    A target classifier $f$ with loss function $J$, a benign image $\bm{x}$ with its ground-truth label $y$.
    \Require
    The perturbation size $\epsilon$, step-size parameter $\alpha>0$, constant $\delta > 0$, EMA parameter $\beta>0$, momentum parameter $\mu>0$ and total number of iterations $T$.
    \State Initialize $\bm{g}_0=0$, $\bm{x}^{adv}_0=\bm{x}$ and $V_0 = \mathbf{0}_{d\times d}$.
    \Repeat
    \State Update $V_t$ by Eq. (\ref{arithmetic average}) or Eq. (\ref{EMA}).
    \State $\hat{V_t} = V_t^{\frac{1}{2}}+\displaystyle\frac{\delta }{\sqrt{t}}I_d$.
    \State $
    \begin{aligned}
    & \bm{x}_{t+1}^{adv}= P_{\mathbf{Q}} [\bm{x}_{t}^{adv} + \displaystyle\frac{\alpha}{\sqrt t} \hat{V_t}^{-1} \nabla_{\bm{x}}J(\bm{x}^{adv}_{t},y) \\
    & \ \ \ \ \ \ \ \ \ \ \ \ \ \ \ \ \ \ \ \ \  +\mu (\bm{x}_{t}^{adv}-\bm{x}_{t-1}^{adv})].
    \end{aligned}
    $
    \Until {$t = T$}\\
    \Return $\bm{x}^{adv}=\bm{x}^{adv}_T$.
\end{algorithmic}
\end{algorithm}

In contrast to AdaI-FGM, there is an additional momentum term $\mu (\bm{x}_{t}^{adv}-\bm{x}_{t-1}^{adv})$ in AdaMI-FGM. Similar to the relationship between PGD (\ref{PGD}) and HB method (\ref{constrained HB}), AdaMI-FGM can be regarded as a natural extension of AdaI-FGM in terms of the HB momentum. It is worth mentioning that our AdaMI-FGM is established upon the constrained HB momentum method (\ref{constrained HB}) rather than MI-FGSM.

In Algorithm \ref{alg:AdaI-FGM} and \ref{alg:AdaMI-FGM}, a vanishing factor $\frac{\delta }{\sqrt{t}}I$ is added to the diagonal of $V_t$ and get $\hat{V_t}$. Such an operation is a standard technique to avoid too large steps caused by zero or small gradients in the beginning iterations \cite{Ruder2016AnOO}.

Note that we don't set $\alpha={\epsilon}/{T}$ like MI-FGSM in \cite{Dong2018BoostingAA}, because this setting is too restricted in limiting $\bm{x}^{adv}_{t}$ ($t=1,2,\ldots,T-1$).

It is easy to find that the main difference between the available gradient-based algorithms and our adaptive ones lies in the way of normalizing the scale of the gradients. The former uses the sign function or $l_1$-norm (or the current gradient information) while the latter employs the step-size with the accumulated gradients.

For convex and strongly convex functions, AdaGrad (\ref{AdaGrad}) enjoys the same order of convergence rates of as PGD (\ref{PGD}) \cite{Duchi2010AdaptiveSM, wang2019sadam}. In general convex cases, it was proved that AdaMI-FGM with a time-varying step size has optimal individual convergence ($O(1/\sqrt{t})$) \cite{Tao2021TheRO}, which is better than that of SGD-like algorithms ($O(log t/\sqrt{t})$). As AdaMI-FGM with a constant learning rate is used in this paper to generate adversarial examples, we give its convergence proof in the Appendix \ref{sec:appendix}  (see Theorem \ref{thm:bigtheorem} and Theorem \ref{thm:bigtheorem1}). Note that the derived convergence rate is better than that of SGD for sparse problems.

Note that $l_1$-norm plays an important in boosting the adversarial attacks for MI-FGM (\ref{MI}). To further illustrate the role of the accumulated gradients in the step-size strategy, we give another adaptive strategy of arithmetic average of $l_1$-norm of historical gradients, i.e.,
\begin{equation}\label{l1-adaptive}
\hat{V}_{t}=\frac{\sum_{j=1}^{t}{{\|\nabla_{\bm{x}} J(\bm{x}^{adv}_{j},y)}\|_1}+\delta} {t} I_d.
\end{equation}

Like the adaptive strategies (\ref{arithmetic average}) and (\ref{EMA}), we can obtain an adaptive algorithm for adversarial attacks. The main difference between this algorithm and MI-FGSM (\ref{MI}) is that the $l_1$-norm of current $\nabla_{\bm{x}} {J(\bm{x}^{adv}_{t},y)}$ in (\ref{MI}) is replaced by the arithmetic average of $l_1$-norm of historical gradients.

\section{Experiments}
\label{sec:Experiments}
In this section, we conduct extensive experiments to demonstrate the performance of the proposed AdaI-FGM and AdaMI-FGM. The success rate of each attack denotes its misclassification rate of the corresponding models with adversarial examples as inputs.

Our purpose here is mainly to show the benefits brought by the integrated adaptive strategies rather than the comprehensive performance evaluation against a wide range of attack methods and models. Typically, we only focus on comparing I-FGSM, HBI-FGSM and MI-FGSM with the adaptive algorithms AdaI-FGM and AdaMI-FGM.

For clarity, we use the denotation AdaI-FGM(1) and AdaMI-FGM(1) when using the arithmetic average of AdaGrad (\ref{arithmetic average}), and AdaI-FGM(2), AdaI-FGM(3) and AdaMI-FGM(2), AdaMI-FGM(3) when using EMA (\ref{EMA}) and arithmetic average of $l_1$-norm (\ref{l1-adaptive}) respectively.


\subsection{Datasets, models and parameters }
\label{sec:4.1}

We will use the same datasets and models as that in \cite{Dong2018BoostingAA, Lin2020NesterovAG}. For convenience, the datasets and models are detailed in the following.

1000 images are randomly selected, which belong to the 1000 categories from ILSVRC 2012 validation set \cite{Russakovsky2015ImageNetLS}. Note that these images are resized to 299$\times$299$\times$3 and they are almost correctly classified by all the testing models.

We use both normally and adversarially trained models. For normally trained models, we choose Inception-v3 (Inc-v3) \cite{Szegedy2016RethinkingTI}, Inception-v4 (Inc-v4), Inception-Resnet-v2 (IncRes-v2) \cite{Szegedy2017Inceptionv4IA} and Resnet-v2-101 (Res-101) \cite{Ilyas2018BlackboxAA}. For adversarially trained models, we select Inc-v3ens3, Inc-v3ens4 and IncRes-v2ens \cite{Tramr2018EnsembleAT}.

Among all the experiments, the parameters of optimization problem (\ref{adv-optimization}) is fixed, i.e., we set the maximum perturbation $\epsilon=16$ and the total number of iteration $T = 10$. Following \cite{Dong2018BoostingAA}, the step-size $\alpha=\epsilon/T$ in I-FGSM and MI-FGSM to make the generated adversarial examples satisfy the $l_\infty$ ball constraints. For MI-FGSM, $\mu=1$ \cite{Dong2018BoostingAA}. As the usual selection in RMSProp and Adam, we set the decay factor $\beta=0.999$ in EMA (\ref{EMA}). For AdaI-FGM and AdaMI-FGM, similar to the experiments in \cite{wang2019sadam} and \cite{Tao2021TheRO}, the vanishing factor $\delta$ is usually set to be a small number. In this paper, we choose $\delta=10^{-6}$.

\subsection{Comparison between AdaI-FGM and I-FGSM}
\label{sec:4.2}


Note that the concerned methods only have one important adjustable parameter, i.e., the step-size parameter $\alpha$. We choose $\alpha$ from the set of $\{2^{-7}, 2^{-6},2^{-5},2^{-4},2^{-3}, 2^{-2},2^{-1}\}$ for all experiments. In AdaI-FGM(1), we choose $\alpha=2^{-4}$. In AdaI-FGM(2), we select $\alpha=2^{-7}$. For AdaI-FGM(3), we set $\alpha=2^{-5}$.

\begin{figure*}[tbh]\small
\setlength{\abovecaptionskip}{0.1cm}
\setlength{\belowcaptionskip}{-0.5cm}
	\centering
	\begin{minipage}{0.33\linewidth}
		\centering
		\includegraphics[width=1\textwidth,height=130pt]{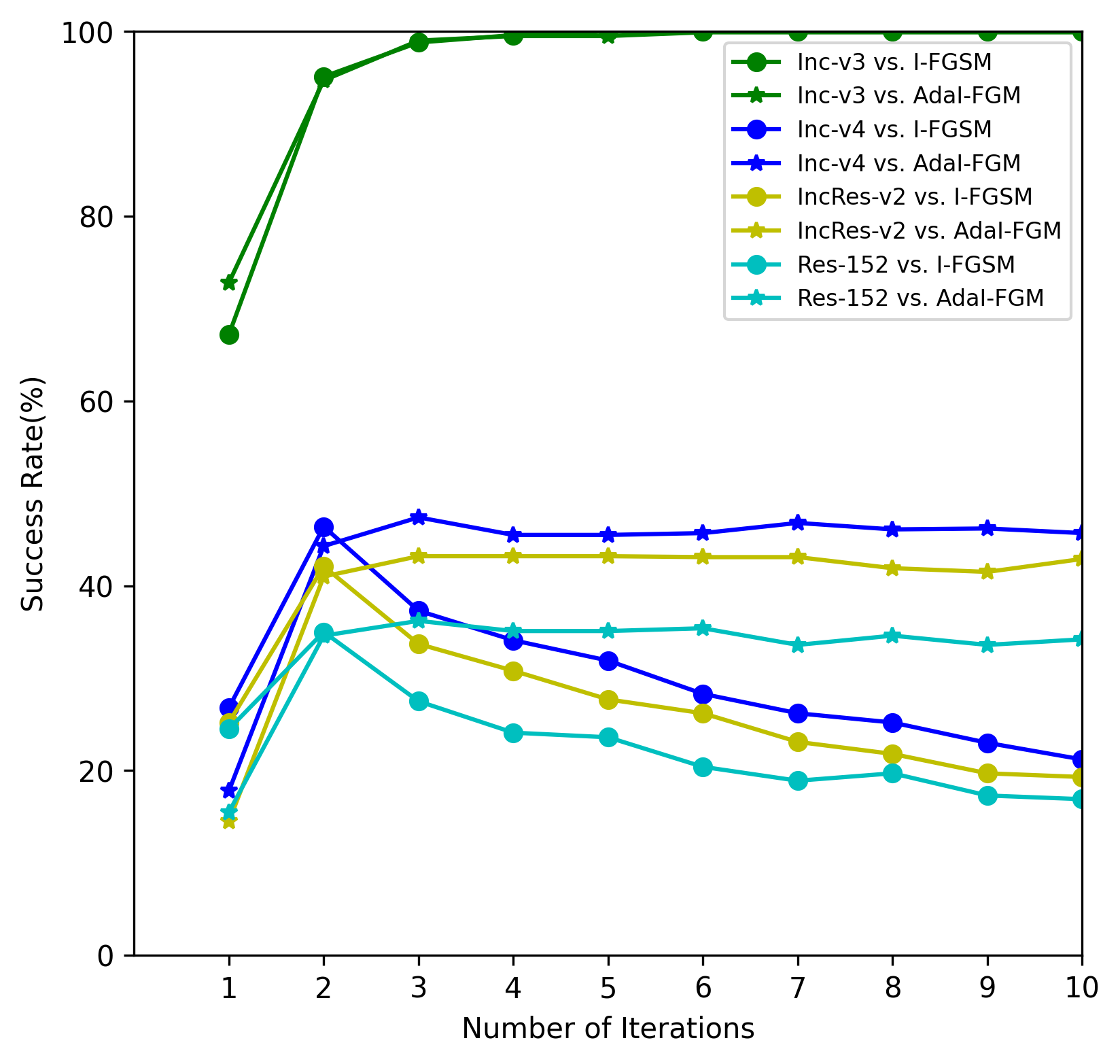}
        \centerline{(a) AdaI-FGM(1)}
	\end{minipage}
	\begin{minipage}{0.33\linewidth}
		\centering
		\includegraphics[width=1\textwidth,height=130pt]{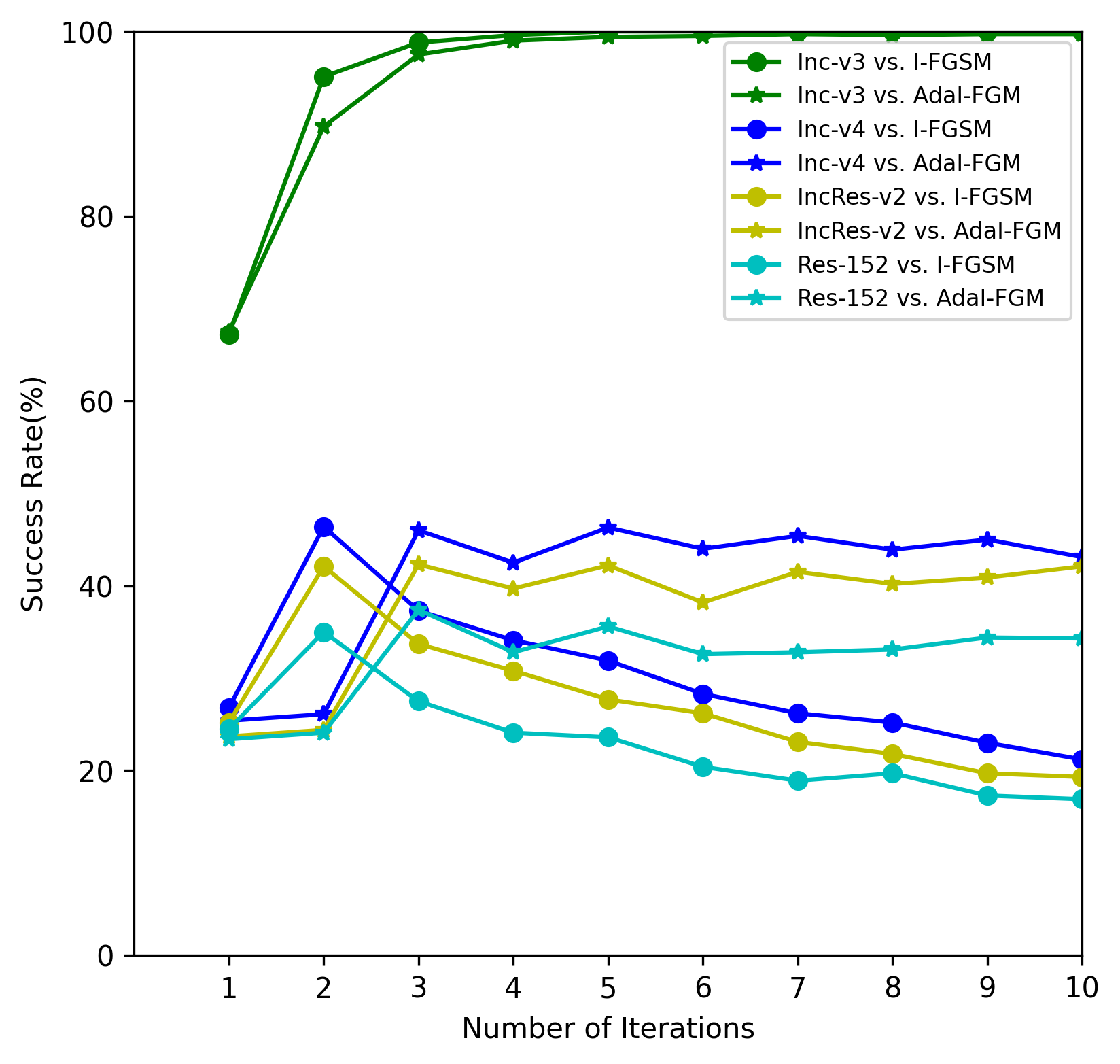}
        \centerline{(b) AdaI-FGM(2)}
	\end{minipage}
    \begin{minipage}{0.33\linewidth}
		\centering
		\includegraphics[width=1\textwidth,height=130pt]{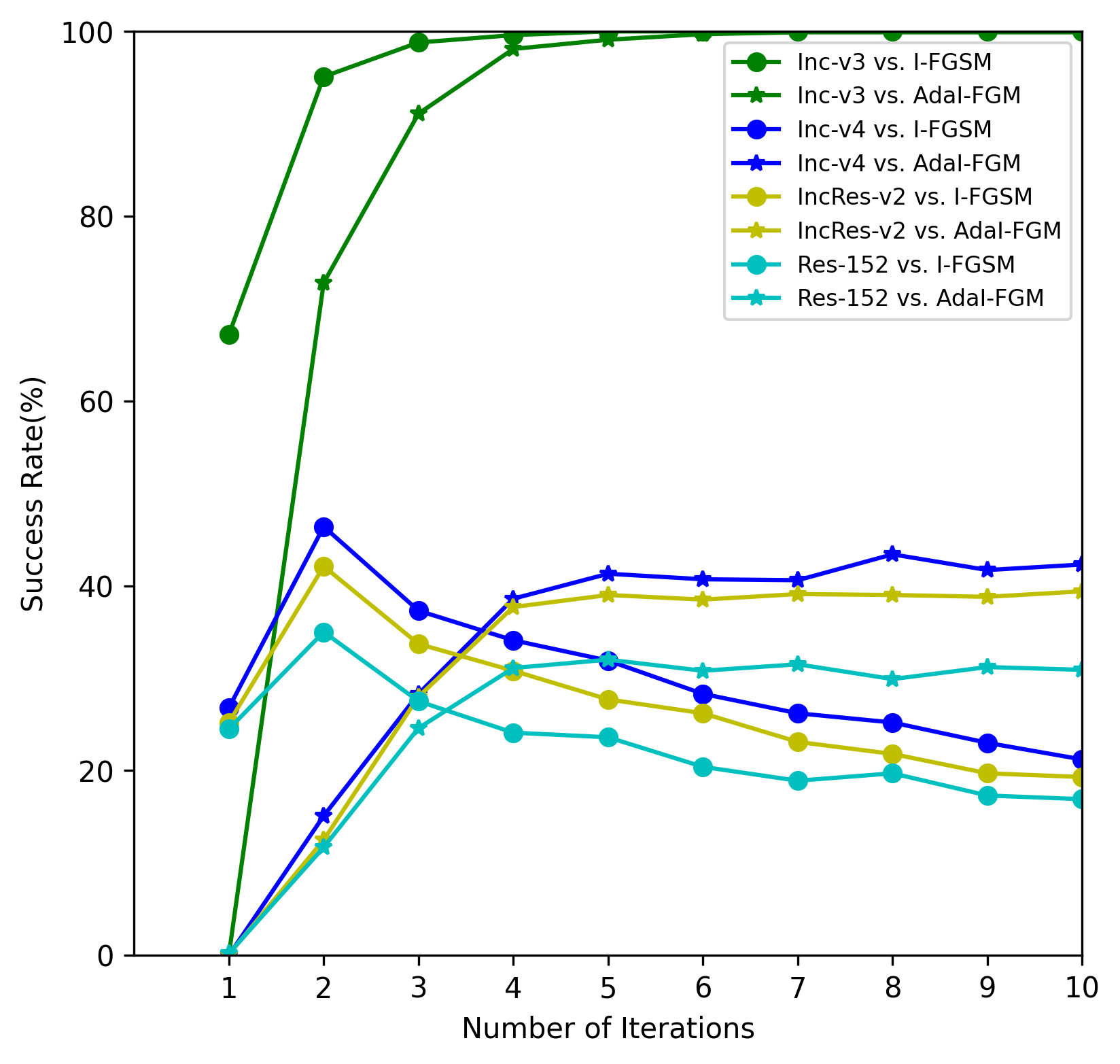}
        \centerline{(c) AdaI-FGM(3)}
	\end{minipage}
\caption{\normalsize{Attack success rates (\%) of the adversarial examples generated for Inc-v3 model against Inc-v3 (white-box), Inc-v4, IncResv2 and Res-152 (black-box). We compare the results of I-FGSM and AdaI-FGM with different iterations. }}
\label{fig:AI}
\end{figure*}

\begin{table*}[t]\small
\setlength{\abovecaptionskip}{0.1cm}
\setlength{\belowcaptionskip}{-0.2cm}
\captionsetup{font={normalsize}}
\caption{\normalsize{Attack success rates (\%) of adversarial attacks against baseline models. The adversarial examples are crafted on Inc-v3, Inc-v4, IncRes-v2, and Res-101 respectively using I-FGSM and AdaI-FGM. $^*$ indicates the white-box attacks.}}
\label{I_and_AI}
\centering
\begin{tabular}{c|c|ccccccc}
	\toprule 
	Model & Attack  &  Inc-v3 & Inc-v4 & IncRes-v2 & Res-101 & Inc-v3ens3 & Inc-v3ens4 & IncRes-v2ens \\
	\midrule 
	\multirow{4}{*}{Inc-v3}& I-FGSM & 100.0$^*$ & 21.2 & 19.3 & 16.9 & 5.3 & 5.0 & 2.8\\ 
	\multirow{4}{*}{ }& AdaI-FGM(1) & 99.9$^*$ & 45.7 & 42.9 & 34.2 & 9.2 & 9.6 &  5.3  \\
    \multirow{4}{*}{ }& AdaI-FGM(2) & 99.7$^*$ & 43.1 & 42.1 & 34.3 & 9.8 & 10.3 &  4.9   \\
    \multirow{4}{*}{ }& AdaI-FGM(3) & 99.9$^*$ & 42.3 & 39.4 & 30.9 & 10.8 & 11.8 & 5.2     \\ \hline
    \multirow{4}{*}{Inc-v4}& I-FGSM & 31.3  & 99.8$^*$& 21.0 & 20.1 & 5.9 & 6.7 & 3.4  \\ 
	\multirow{4}{*}{ }& AdaI-FGM(1) &  57.2 & 100$^*$& 44.1 & 39.5 & 12.2 & 11.0 &  5.9  \\
    \multirow{4}{*}{ }& AdaI-FGM(2) &  58.7 &  99.8$^*$& 43.7 & 38.3 & 11.4 & 11.9 &  6.3   \\
    \multirow{4}{*}{ }& AdaI-FGM(3) &  54.3 &  99.9$^*$& 41.1 & 36.6 & 14.7 & 12.4 &  6.5     \\ \hline
    \multirow{4}{*}{IncRes-v2}& I-FGSM & 32.2  & 25.2 & 97.9$^*$& 21.0 & 6.7 & 7.0 & 4.1 \\ 
	\multirow{4}{*}{ }& AdaI-FGM(1) & 56.6 & 47.5 & 99.4$^*$ & 38.3 &15.2  & 11.5 & 9.0 \\
    \multirow{4}{*}{ }& AdaI-FGM(2) & 62.6 & 49.5 & 99.6$^*$ & 40.0 & 14.1 & 11.8 &  7.9   \\
    \multirow{4}{*}{ }& AdaI-FGM(3) & 57.0 & 46.5 & 98.7$^*$ & 37.8 & 17.9 & 12.3 &  100  \\ \hline
    \multirow{4}{*}{Res-101}& I-FGSM & 30.2  & 26.5 & 22.8 & 99.2$^*$ & 7.1 & 8.1 & 5.4 \\ 
	\multirow{4}{*}{ }& AdaI-FGM(1) & 57.8 & 53.6 & 49.8 & 99.0$^*$ & 16.7 & 16.1 &  8.6  \\
    \multirow{4}{*}{ }& AdaI-FGM(2) & 58.8 & 53.2 & 49.2 & 98.5$^*$ & 14.4 & 13.6 &   8.2    \\
    \multirow{4}{*}{ }& AdaI-FGM(3) & 55.8 & 48.4 & 45.2 & 99.0$^*$ & 21.4 & 17.1 & 10.1      \\ 
	\bottomrule 
	\end{tabular}
\end{table*}

The success rates of attacks against normally and adversarially trained models are reported in Table \ref{I_and_AI}.
It can be observed that the proposed adaptive methods keep a strong white-box adversary like I-FGSM since they can attack a white-box model with a near 100$\%$ success rate. For black-box attacks, it can be seen that our adaptive methods consistently outperform I-FGSM. In fact, each of our AdaI-FGSM achieves nearly twice success rates of I-FGSM.

Note that each AdaI-FGM only slightly modifies I-FGSM in using a different average strategy about the past gradients to replace the current gradient as its step-size. However, the experimental results shows that this simple operation can remarkably boosts the adversarial attacks, verifying the effectiveness of our AdaI-FGM methods.

It is worth indicating that the success rates of our AdaI-FGM is even approaching the level of the state-of-the-art MI-FGSM (see Table \ref{MI_and_AMI1}). Such a fact clearly illustrates our motivation in this paper, i.e., adapting the step-size plays almost the same part as updating the iterative direction in boosting the success rates when the accumulated gradients are integrated.

To illustrate the stability of AdaI-FGM methods, we investigate the changing behaviour of success rates with respect to the number of iteration. For convenience, we only consider the arithmetic average of AdaGrad (\ref{arithmetic average}) and the adversarial examples are crafted on Inc-v3. 
The success rates of adversarial examples are evaluated against Inc-v3, Inc-v4, IncRes-v2 and Res-101 models. The relationships between the success rates and the number of iterations are shown in Fig.\ref{fig:AI}.

It can be observed that when the number of iterations increases, the success rates of I-FGSM for black-box attacks decrease while that of AdaI-FGM maintains at a relatively stable value. These experimental results illustrate that I-FGSM can easily overfit the white-box attacks but suffering from poor transferability. Such a phenomena has already been pointed out in \cite{Dong2018BoostingAA}. Fortunately, as can be seen from Fig.\ref{fig:AI}, our AdaI-FGM methods effectively alleviate the trade-off between the white-box attacks and the transferability. So, we can say that AdaI-FGM significantly improves the performance of I-FGSM.

\subsection{Comparison between HBI-FGSM, AdaMI-FGM and MI-FGSM }

For simplicity, in this subsection, we only focus on the AdaMI-FGM using the arithmetic average of AdaGrad (\ref{arithmetic average}).  We use the same step-size $\alpha$ as that in Section 4.1. Now, there is only one important adjustable parameter in HBI-FGSM and AdaMI-FGM, i.e., the momentum parameter $\mu$. We choose $\mu=0.07$, which is obviously different from $\mu=1$ in MI-FGM. Note that when $\mu=0$, AdaMI-FGM will become AdaI-FGM and HBI-FGSM will become I-FGSM. As AdaI-FGM performs well (see Table \ref{I_and_AI}), we only want to slightly modify the effect of AdaI-FGM by setting a small momentum parameter $\mu$.

\begin{table*}[t]\small
\setlength{\abovecaptionskip}{0.1cm} 
\setlength{\belowcaptionskip}{-0.4cm}
\captionsetup{font={normalsize}}
\caption{\normalsize{Attack success rates (\%) of adversarial attacks against normally trained and adversarially trained models. The adversarial examples are crafted on Inc-v3, Inc-v4, IncRes-v2, and Res-101 respectively by HBI-FGSM, AdaMI-FGM and MI-FGSM. $^*$ indicates the white-box attacks.}}
\label{MI_and_AMI1}
\centering
\begin{tabular}{c|c|ccccccc}
 \toprule 
 Model & Attack  &  Inc-v3 & Inc-v4 & IncRes-v2 & Res-101 & Inc-v3ens3 & Inc-v3ens4 & IncRes-v2ens \\
 \midrule 
 \multirow{3}{*}{Inc-v3}& HBI-FGSM & 99.8$^*$ & 32.3 & 28.2 & 22.5 & 6.6 & 7.7 & 4.0 \\
    \multirow{3}{*}{ } & AdaMI-FGM & 99.9$^*$ & 46.2 & 42.2 & 34.4 & 8.8 & 9.2 & 5.1 \\
    \multirow{3}{*}{ } & MI-FGSM & 100.0$^*$ & 44.0 & 40.8 & 35.1 & 13.7 & 13.2 & 6.1\\ \hline
    \multirow{3}{*}{Inc-v4}& HBI-FGSM & 41.4 & 99.7$^*$ & 28.9 & 25.8 & 7.4 & 7.6 & 4.8  \\
    \multirow{3}{*}{ }  & AdaMI-FGM & 56.9  & 100.0$^*$& 45.2 & 38.7 & 11.8 & 10.5 & 6.1\\
    \multirow{3}{*}{ }& MI-FGSM & 55.7  & 99.8$^*$& 47.4 & 42.0 & 16.2 & 14.9 & 7.4\\ \hline
    \multirow{3}{*}{IncRes-v2}& HBI-FGSM & 43.8 & 32.0 & 98.7$^*$  & 24.4  & 7.0 & 7.6 & 4.7 \\
    \multirow{3}{*}{ } & AdaMI-FGM & 57.6 & 47.6 & 99.5$^*$ & 38.2 & 14.3 & 11.2 & 8.2\\
    \multirow{3}{*}{ }& MI-FGSM & 59.3 & 50.5 & 97.8$^*$& 46.1  & 23.2 & 17.4 & 11.4 \\ \hline
    \multirow{3}{*}{Res-101}& HBI-FGSM & 44.6  & 38.9 & 34.8 & 99.3$^*$ &  9.0 & 9.1 & 5.5 \\
    \multirow{3}{*}{ } & AdaMI-FGM & 60.3  & 54.3 & 51.3 & 99.0$^*$ & 17.7 &  14.7& 8.0\\
    \multirow{3}{*}{ } & MI-FGSM & 57.9 & 49.2 & 49.2 & 99.3$^*$ & 24.5 & 21.6 & 12.5  \\
 \bottomrule 
 \end{tabular}
\end{table*}

The success rates of attacks against normally and adversarially trained models are reported in Table \ref{MI_and_AMI1}.
It is easy to find that AdaMI-FGSM consistently outperforms HBI-FGSM. This experimental result further illustrates the role of adapting step-size with the accumulated gradients in improving the transferability of adversarial attacks for gradient-based algorithms.

What is more, we can also see that AdaMI-FGM outperforms MI-FGSM 8 times in total 16 attacks against normally trained models, i.e., AdaMI-FGM can reach almost the same success rates as the state-of-the-art MI-FGSM. So, AdaMI-FGM remains competitive with MI-FGSM for black-box attacks. Unfortunately, from Table \ref{MI_and_AMI1}, AdaMI-FGM is consistently inferior to MI-FGSM in attacks against adversarially trained models. This phenomena can be interpreted from the perspective of optimization. Recall that the goal of adversarial attacks is to seek an example that misleads the model decision and this can be cast as optimization problem (\ref{adv-optimization}), i.e., optimization problem (\ref{adv-optimization}) is only good for adversarial attacks. AdaMI-FGM only focuses on solving the induced problem (\ref{adv-optimization}) with specifically treating the adversarially trained models. Instead, one should solve the {\it min-max} problem (\ref{adav-training}) to get a high success rate against adversarially trained models \cite{Madry2018TowardsDL, Wang2019OnTC}.


Nevertheless, AdaMI-FGM is theoretically motivated and it indeed has some practical advantages especially in keeping stability and accelerating convergence for solving the optimization problems. To illustrate the stability of AdaMI-FGM, like that in AdaI-FGM, we show the relationship between the success rate and the number of iterations in Fig.\ref{fig:AMI}.

\begin{figure}[ht]\small
\setlength{\abovecaptionskip}{0.1cm} 
\setlength{\belowcaptionskip}{-0.4cm}
\centering
\includegraphics[width=0.45\textwidth,height=140pt]{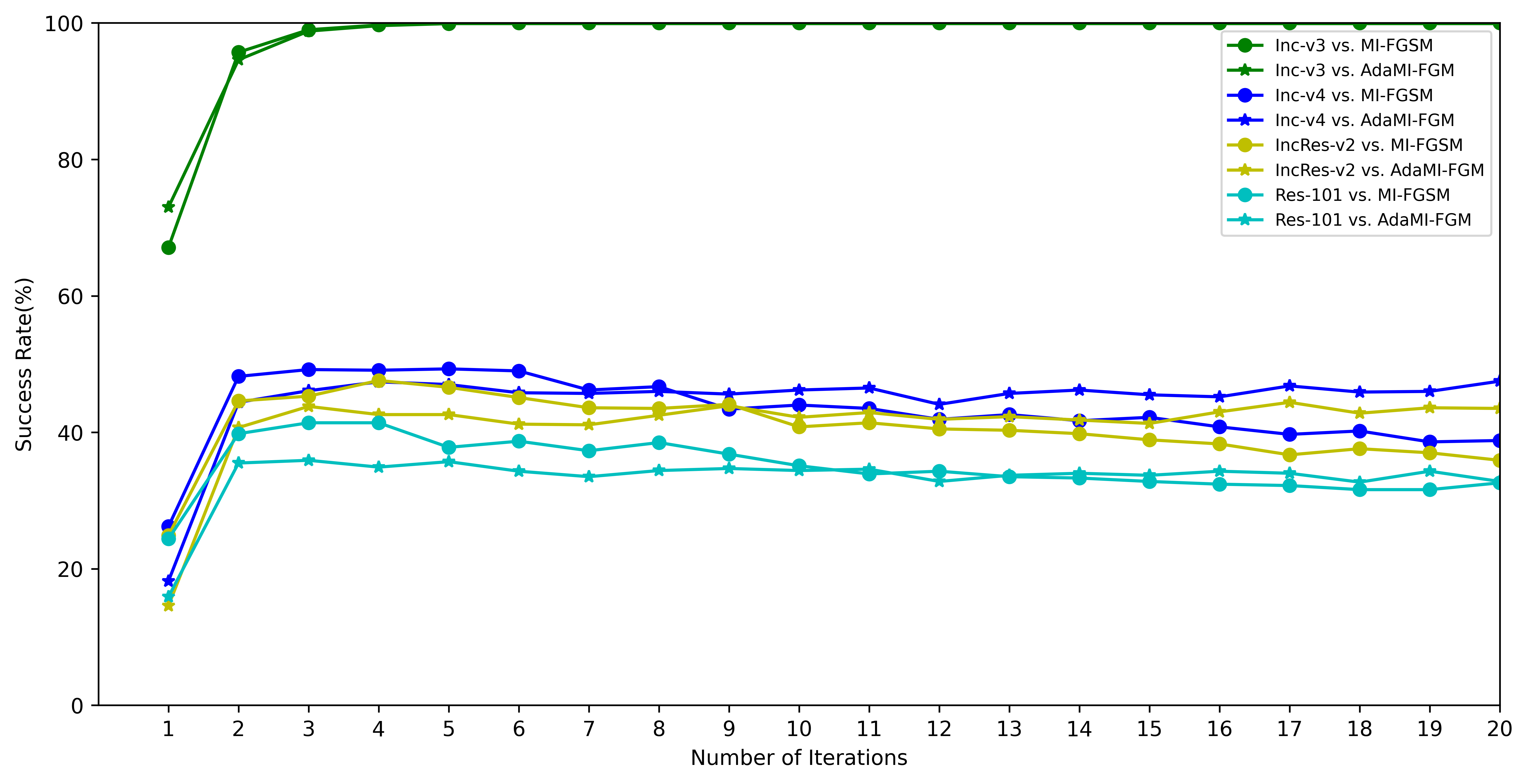}
\caption{\normalsize{Attack success rates (\%) of the adversarial examples generated for Inc-v3 model against Inc-v3 (white-box), Inc-v4, IncResv2 and Res-152 (black-box). We compare the results of MI-FGSM and AdaMI-FGM with different iterations.}}
\label{fig:AMI}
\end{figure}

From Fig.\ref{fig:AI} and Fig.\ref{fig:AMI}, it can be observed that when the number of iterations increases, the success rate of MI-FGSM for black-box attacks looks much more stable than that of I-FGSM. This fact has already been indicated in \cite{Dong2018BoostingAA}. Besides, the phenomena that using momentum at each iteration can stabilizes the update direction has also been illustrated in \cite{Dong2018BoostingAA}. Fortunately, as shown in Fig.\ref{fig:AI} and Fig.\ref{fig:AMI}, our AdaMI-FGM has the most stable behavior among I-FGSM, MI-FGSM and AdaI-FGM. This illustrates that integrated the accumulate gradients in both step-size and update direction can further stabilizes the whole optimization process.

To make a through comparison between AdaMI-FGM and MI-FGSM, we also investigate the convergence behaviour of loss function $J(\bm{x}_{t}^{adv},y)$ with respect to the number of iterations. The relationship between the value of loss function $J(\bm{x}_{t}^{adv},y)$ and the number of iterations is shown in Fig.\ref{fig:loss}. As can be seen in Fig.\ref{fig:loss}, AdaMI-FGM converges consistently fast than MI-FGSM. In viewpoint of pure optimization algorithms, AdaMI-FGM is more suitable for solving optimization problem (\ref{adv-optimization}) than MI-FGSM.

\begin{figure}[ht]\small
\setlength{\abovecaptionskip}{0.1cm} 
\setlength{\belowcaptionskip}{-0.4cm}
\centering
\includegraphics[width=0.45\textwidth,height=140pt]{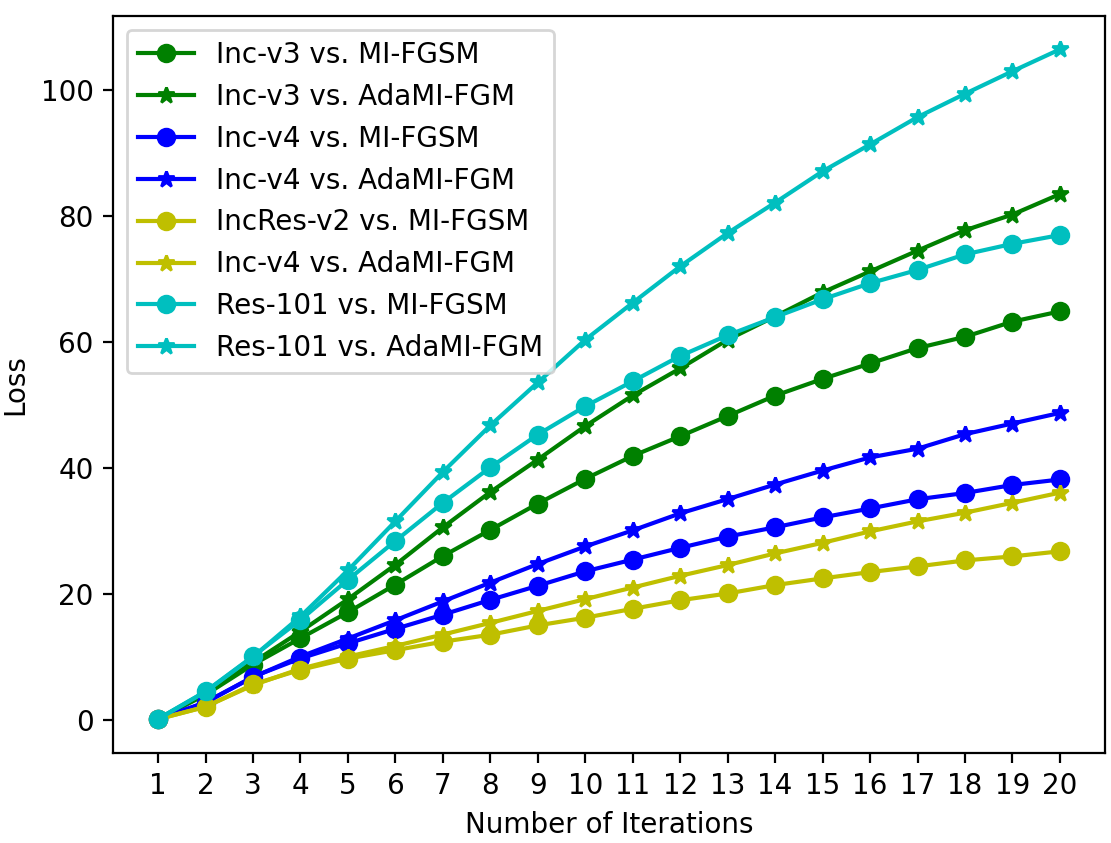}
\caption{\normalsize{Values of loss function of the adversarial examples generated for Inc-v3, Inc-v4, IncResv2 and Res-152 models. We compare the results of MI-FGSM and AdaMI-FGM with different iterations.}}
\label{fig:loss}
\end{figure}

\subsection{Discussion}

It is proved that AdaMI-FGM is an optimal algorithm for solving constrained convex problems \cite{Tao2021TheRO}. Unfortunately, when AdaMI-FGM is applied to deal with adversarial attack tasks, its success rate against MI-FGSM is still unsatisfactory. We think that this deficiency is caused by the loss function $J(\bm{x}^{adv},y)$ in optimization problem (\ref{adv-optimization}). From Fig.\ref{fig:AMI} and Fig.\ref{fig:loss}, it is clearly observed that solving optimization problem (\ref{adv-optimization}) with high accuracy does not imply that the transferability of adversarial examples is boosted. Recent investigation \cite{Zhang2022ProvingCM} has revealed that although many gradient-based methods have been proposed to enhance the transferability of adversarial perturbations, these methods are designed in a heuristic manner. Instead, they have proposed a new interaction-based loss function to replace the commonly-used $J(\bm{x}^{adv},y)$ in optimization problem (\ref{adv-optimization}). Both theoretical analysis and experiments show that their new loss significantly improves the adversarial transferability. If we use AdaMI-FGM to solve the induced optimization problem in \cite{Zhang2022ProvingCM}, satisfactory transferability against MI-FGSM may be expected.

\section{Conclusion and future work}

In this paper, we first explain the gradient-based attack methods from the perspective of adapting step-size. Then, we propose a broad class of adaptive gradient-based iterative methods, in which the accumulated gradients of the loss function is further used to adapt the step-size. We answer several theoretical concerns when using gradient-based methods to solve the induced problems. The idea in this paper inspires us to introduce more adaptive and regular optimization methods to the field of adversarial attacks. NAG in its adaptive setting and adaptive methods for solving interaction-based optimization problem in \cite{Zhang2022ProvingCM} and {\it min-max} problem (\ref{adav-training}) will be investigated in the future work.




\bibliographystyle{unsrt}
\bibliography{egbib}

\appendix
\clearpage
\onecolumn

\section{Appendix}
\subsection{Convergence Analysis of AdaMI-FGM with a Constant Learning Rate}
\label{sec:appendix}

To analyze convergence of AdaMI-FGM, we require some assumptions.
To analyze convergence of AdaMI-FGM, we require some assumptions.

\noindent {\bf Assumption 1.} {\it Assume that the objective function $J(\boldsymbol{x}, y)$ is concave and there exist constants $G > 0$ and $G_{\infty} > 0$ such that
\begin{equation*}
  \| \nabla_{\boldsymbol{x}}J(\boldsymbol{x}^{adv},y)\| \leq G \ \text{and} \ \| \nabla_{\boldsymbol{x}}J(\boldsymbol{x}^{adv},y)\|_{\infty} \leq G_{\infty}, \ \  \forall \boldsymbol{x}^{adv} \in \mathbf{Q},
\end{equation*}
\label{ass:xfinite}}
where $\| \cdot \|$ represents the $L_2$-norm.

Generating a non-targeted adversarial example $\boldsymbol{x}^{adv}$ from
a real example $\boldsymbol{x}$ can be formulated as the following optimization problem
\begin{equation}\label{adv-optimization}
\max J(\boldsymbol{x}^{adv},y), \ s. t. \  \mathbf{Q}=\|\boldsymbol{x}^{adv}-\boldsymbol{x}\|_p \leq \epsilon.
\end{equation}

AdaMI-FGM for solving problem (\ref{adv-optimization}) is
\begin{equation}\label{algorithm2}
   \begin{aligned}
   \boldsymbol{x}^{adv}_{t+1}=P_\mathbf{Q}[\boldsymbol{x}^{adv}_t+\alpha \hat{V}^{-1}_t\boldsymbol{g}_t+\mu(\boldsymbol{x}^{adv}_t-\boldsymbol{x}^{adv}_{t-1})],
   \end{aligned}
 \end{equation}
where
\begin{equation*}
  \begin{aligned}
     \boldsymbol{g}_t=\nabla_{\boldsymbol{x}}J(\boldsymbol{x}^{adv}_{t}, y), \ \hat{V}_t = V_t^{\frac{1}{2}} + \frac{\delta}{\sqrt{t}}I_d \ \text{and} \  V_t = \beta_t V_{t-1} + (1-\beta_t)diag(\boldsymbol{g}_t\boldsymbol{g}_t^T).
  \end{aligned}
\end{equation*}

In order to make our proof easy to understand, we first consider the MI-FGM (\ref{algorithm1}), which is a specific AdaMI-FGM without using the adaptive step size strategy. For simplicity, $J(\boldsymbol{x}^{adv}_{t}, y)$ will be rewritten as $J(\boldsymbol{x}^{adv}_{t})$ in the following.

Naturally, MI-FGM can be formulated as
 \begin{equation}\label{algorithm1}
   \begin{aligned}
   \boldsymbol{x}^{adv}_{t+1}=P_\mathbf{Q}[\boldsymbol{x}^{adv}_t+\alpha\boldsymbol{g}_t+\mu(\boldsymbol{x}^{adv}_t-\boldsymbol{x}^{adv}_{t-1})].
   \end{aligned}
 \end{equation}

\begin{lemma} \label{lem:usefullemma}
(Dimitri P. et al., 2003) For $\boldsymbol{x} \in \mathbb{R}^d$, $\boldsymbol{x}_0 \in \mathbf{Q}$,
 \begin{equation*}
   \begin{aligned}
     \langle \boldsymbol{x}-\boldsymbol{x}_{0}, \boldsymbol{u}-\boldsymbol{x}_0\rangle \leq 0
   \end{aligned}
 \end{equation*}
for all $\boldsymbol{u} \in \mathbf{Q}$ if and only if $\boldsymbol{x}_0=P(\boldsymbol{x})$.
\end{lemma}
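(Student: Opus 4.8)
The plan is to prove the two implications directly from the definition of the Euclidean projection, $P(\boldsymbol{x}) = \arg\min_{\boldsymbol{z}\in\mathbf{Q}}\|\boldsymbol{x}-\boldsymbol{z}\|^2$; this is the classical variational (first-order) characterization of the projection onto a nonempty closed convex set. First I would record that, because $\mathbf{Q}$ is nonempty, closed and convex and $\boldsymbol{z}\mapsto\|\boldsymbol{x}-\boldsymbol{z}\|^2$ is coercive and strictly convex, the minimizer exists and is unique, so $P(\boldsymbol{x})$ is well defined. This is the only genuinely analytic ingredient, and it is what later lets the conclusion assert the \emph{equality} $\boldsymbol{x}_0 = P(\boldsymbol{x})$ rather than mere optimality.

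For the forward direction, assume $\boldsymbol{x}_0 = P(\boldsymbol{x})$ and fix $\boldsymbol{u}\in\mathbf{Q}$. By convexity of $\mathbf{Q}$, the segment point $\boldsymbol{x}_0 + \lambda(\boldsymbol{u}-\boldsymbol{x}_0)$ lies in $\mathbf{Q}$ for every $\lambda\in[0,1]$, so the scalar function $\phi(\lambda) = \|\boldsymbol{x}-\boldsymbol{x}_0-\lambda(\boldsymbol{u}-\boldsymbol{x}_0)\|^2$ attains its minimum over $[0,1]$ at the endpoint $\lambda = 0$. Hence its one-sided derivative satisfies $\phi'(0^{+})\ge 0$, and since $\phi'(0^{+}) = -2\langle\boldsymbol{x}-\boldsymbol{x}_0,\,\boldsymbol{u}-\boldsymbol{x}_0\rangle$, this gives $\langle\boldsymbol{x}-\boldsymbol{x}_0,\,\boldsymbol{u}-\boldsymbol{x}_0\rangle\le 0$.

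For the converse, suppose $\langle\boldsymbol{x}-\boldsymbol{x}_0,\,\boldsymbol{u}-\boldsymbol{x}_0\rangle\le 0$ for all $\boldsymbol{u}\in\mathbf{Q}$. For an arbitrary $\boldsymbol{u}\in\mathbf{Q}$ I would expand the square
\[
\|\boldsymbol{x}-\boldsymbol{u}\|^2 = \|\boldsymbol{x}-\boldsymbol{x}_0\|^2 + 2\langle\boldsymbol{x}-\boldsymbol{x}_0,\,\boldsymbol{x}_0-\boldsymbol{u}\rangle + \|\boldsymbol{x}_0-\boldsymbol{u}\|^2 \ge \|\boldsymbol{x}-\boldsymbol{x}_0\|^2,
\]
since the cross term equals $-2\langle\boldsymbol{x}-\boldsymbol{x}_0,\,\boldsymbol{u}-\boldsymbol{x}_0\rangle\ge 0$ by hypothesis and the last term is nonnegative. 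Thus $\boldsymbol{x}_0$ minimizes $\|\boldsymbol{x}-\cdot\|^2$ over $\mathbf{Q}$, and uniqueness of the minimizer forces $\boldsymbol{x}_0 = P(\boldsymbol{x})$.

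Since the statement is a standard textbook fact, reproduced here only to keep the appendix self-contained, there is no real obstacle. The two points needing slight care are: using the \emph{one-sided} derivative $\phi'(0^{+})\ge 0$ rather than claiming $\phi'(0)=0$ (because $\lambda=0$ is an endpoint of $[0,1]$, not an interior stationary point), and invoking strict convexity of the squared norm so that the converse yields the equality $\boldsymbol{x}_0 = P(\boldsymbol{x})$ rather than only ``$\boldsymbol{x}_0$ is a minimizer''.
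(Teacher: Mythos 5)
Your proof is correct: both directions (the one-sided derivative argument for necessity and the expansion of the square plus uniqueness of the minimizer for sufficiency) are the standard textbook argument for the variational characterization of the projection onto a closed convex set. The paper itself gives no proof of this lemma, stating it as a known result cited from Bertsekas et al.\ (2003), and your argument is essentially the one found in that reference, so there is nothing to reconcile.
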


\begin{theorem}
\label{thm:bigtheorem}
Let $\boldsymbol{x}^{\ast} $ be a solution of problem (\ref{adv-optimization}). Assume that $\mathbf{Q}$ is bounded and $0 \leq \mu < 1$. Let $\{ \boldsymbol{x}^{adv}_t\}_{t=1}^{\infty} $ be generated by MI-FGM (\ref{algorithm1}).
Set\\
\begin{gather*}
  \mathbf{p}_t=\frac{\mu}{1-\mu}(\boldsymbol{x}^{adv}_t-\boldsymbol{x}^{adv}_{t-1}) \ \text{and} \ \alpha \equiv \frac{C}{\sqrt{T}},
\end{gather*}
where $T$ is the total number of iterations.
Then we have
$$  \boldsymbol{x}^{adv}_{t+1}+\mathbf{p}_{t+1}=P_{\mathbf{Q}}[\boldsymbol{x}^{adv}_t+\mathbf{p}_{t}+\frac{\alpha}{1-\mu}\boldsymbol{g}_t]$$
  and
  $$ J(\boldsymbol{x}^\ast)-J(\boldsymbol{\bar{x}}^{adv}_T) \leq O(\frac{1}{\sqrt{T}}),$$
where $\boldsymbol{\bar{x}}^{adv}_T = \frac{1}{T}\sum_{t=1}^{T}\boldsymbol{x}^{adv}_t$.
\end{theorem}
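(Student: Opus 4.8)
The plan is to prove the two assertions in order: (i) the algebraic identity rewriting MI-FGM (\ref{algorithm1}) as a projected gradient ascent on the shifted iterate $\boldsymbol{y}_t := \boldsymbol{x}^{adv}_t + \mathbf{p}_t$, and (ii) the $O(1/\sqrt{T})$ bound for the running average $\bar{\boldsymbol{x}}^{adv}_T$, which will then follow from (i) by a standard telescoping argument in the spirit of projected subgradient analysis, combined with concavity of $J$.

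\emph{Establishing the reformulation.} Set $\eta = \frac{\alpha}{1-\mu}$. From $\mathbf{p}_t = \frac{\mu}{1-\mu}(\boldsymbol{x}^{adv}_t - \boldsymbol{x}^{adv}_{t-1})$ one checks $\boldsymbol{y}_t = \frac{1}{1-\mu}\boldsymbol{x}^{adv}_t - \frac{\mu}{1-\mu}\boldsymbol{x}^{adv}_{t-1}$, so that $\boldsymbol{y}_t + \eta\boldsymbol{g}_t = \frac{1}{1-\mu}\big(\boldsymbol{x}^{adv}_t + \alpha\boldsymbol{g}_t + \mu(\boldsymbol{x}^{adv}_t-\boldsymbol{x}^{adv}_{t-1})\big) - \frac{\mu}{1-\mu}\boldsymbol{x}^{adv}_t$, where the first bracket is exactly the argument of the projection in (\ref{algorithm1}). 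In the unconstrained case the extra $\mu\boldsymbol{x}^{adv}_t$ terms cancel and one gets $\boldsymbol{y}_{t+1} = \boldsymbol{y}_t + \eta\boldsymbol{g}_t$ directly; the content of the first claimed identity is that, after inserting the projection, this collapse still yields $\boldsymbol{y}_{t+1} = P_{\mathbf{Q}}[\boldsymbol{y}_t + \eta\boldsymbol{g}_t]$. I would establish this by substituting $\boldsymbol{x}^{adv}_{t+1} = P_{\mathbf{Q}}[\cdot]$ into $\boldsymbol{y}_{t+1} = \frac{1}{1-\mu}\boldsymbol{x}^{adv}_{t+1} - \frac{\mu}{1-\mu}\boldsymbol{x}^{adv}_t$ and matching both sides through the variational characterization of the projection in Lemma \ref{lem:usefullemma}. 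This is the delicate part of the whole argument: it is exactly where one must argue that the momentum recombination is compatible with the projection onto $\mathbf{Q}$, and I expect to spend most of the care here — in the worst case absorbing any residual mismatch into the $O(\alpha)$ drift terms handled below.

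\emph{One-step inequality and drift control.} Granting the reformulation, apply Lemma \ref{lem:usefullemma} with $\boldsymbol{y}_t+\eta\boldsymbol{g}_t$ in the role of $\boldsymbol{x}$, its projection $\boldsymbol{y}_{t+1}$ in the role of $\boldsymbol{x}_0$, and $\boldsymbol{u} = \boldsymbol{x}^{\ast}\in\mathbf{Q}$, to get $\langle \boldsymbol{y}_t + \eta\boldsymbol{g}_t - \boldsymbol{y}_{t+1},\, \boldsymbol{x}^{\ast} - \boldsymbol{y}_{t+1}\rangle \leq 0$. Expanding the inner product and rearranging gives
\begin{equation*}
2\eta\langle \boldsymbol{g}_t,\, \boldsymbol{x}^{\ast}-\boldsymbol{y}_{t+1}\rangle \ \leq\ \|\boldsymbol{y}_t-\boldsymbol{x}^{\ast}\|^2 - \|\boldsymbol{y}_{t+1}-\boldsymbol{x}^{\ast}\|^2 - \|\boldsymbol{y}_{t+1}-\boldsymbol{y}_t\|^2 .
\end{equation*}
Writing $\langle \boldsymbol{g}_t, \boldsymbol{x}^{\ast}-\boldsymbol{y}_{t+1}\rangle = \langle \boldsymbol{g}_t, \boldsymbol{x}^{\ast}-\boldsymbol{x}^{adv}_t\rangle + \langle \boldsymbol{g}_t, \boldsymbol{x}^{adv}_t-\boldsymbol{y}_{t+1}\rangle$ and using concavity of $J$ in the form $J(\boldsymbol{x}^{\ast}) - J(\boldsymbol{x}^{adv}_t) \leq \langle \boldsymbol{g}_t, \boldsymbol{x}^{\ast}-\boldsymbol{x}^{adv}_t\rangle$, everything reduces to bounding the drift $\langle \boldsymbol{g}_t, \boldsymbol{x}^{adv}_t-\boldsymbol{y}_{t+1}\rangle$. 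Since $\boldsymbol{x}^{adv}_t - \boldsymbol{y}_{t+1} = (\boldsymbol{x}^{adv}_t - \boldsymbol{x}^{adv}_{t+1}) - \mathbf{p}_{t+1}$, nonexpansiveness of $P_{\mathbf{Q}}$ gives $\|\boldsymbol{x}^{adv}_{t+1}-\boldsymbol{x}^{adv}_t\| \leq \alpha\|\boldsymbol{g}_t\| + \mu\|\boldsymbol{x}^{adv}_t-\boldsymbol{x}^{adv}_{t-1}\|$; unrolling this recursion with $\mu<1$ and $\|\boldsymbol{g}_t\|\leq G$ (Assumption 1) yields $\|\boldsymbol{x}^{adv}_{t+1}-\boldsymbol{x}^{adv}_t\| \leq \frac{\alpha G}{1-\mu}$, hence $\|\mathbf{p}_{t+1}\| \leq \frac{\mu\alpha G}{(1-\mu)^2}$, so $|\langle \boldsymbol{g}_t, \boldsymbol{x}^{adv}_t-\boldsymbol{y}_{t+1}\rangle| \leq c\alpha$ for a constant $c=c(G,\mu)$; the same bounds keep $\|\boldsymbol{y}_t-\boldsymbol{x}^{\ast}\|$ within the diameter of $\mathbf{Q}$ up to an $O(\alpha)$ term.

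\emph{Telescoping and averaging.} Combining the above, $2\eta\big(J(\boldsymbol{x}^{\ast})-J(\boldsymbol{x}^{adv}_t)\big) \leq \|\boldsymbol{y}_t-\boldsymbol{x}^{\ast}\|^2 - \|\boldsymbol{y}_{t+1}-\boldsymbol{x}^{\ast}\|^2 + 2\eta c\alpha$; summing over $t=1,\dots,T$ the quadratic terms telescope, giving $\sum_{t=1}^{T}\big(J(\boldsymbol{x}^{\ast})-J(\boldsymbol{x}^{adv}_t)\big) \leq \frac{\|\boldsymbol{y}_1-\boldsymbol{x}^{\ast}\|^2}{2\eta} + c\alpha T$. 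Substituting $\eta = \frac{\alpha}{1-\mu}$ and $\alpha = C/\sqrt{T}$, the right-hand side equals $\frac{(1-\mu)\|\boldsymbol{y}_1-\boldsymbol{x}^{\ast}\|^2}{2C}\sqrt{T} + cC\sqrt{T} = O(\sqrt{T})$. Dividing by $T$ and invoking Jensen's inequality for the concave $J$, $J(\bar{\boldsymbol{x}}^{adv}_T) \geq \frac{1}{T}\sum_{t=1}^T J(\boldsymbol{x}^{adv}_t)$, finally gives $J(\boldsymbol{x}^{\ast}) - J(\bar{\boldsymbol{x}}^{adv}_T) \leq \frac{1}{T}\sum_{t=1}^T\big(J(\boldsymbol{x}^{\ast})-J(\boldsymbol{x}^{adv}_t)\big) = O(1/\sqrt{T})$, which is the claim. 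The main obstacle is the reformulation step: once the identity $\boldsymbol{y}_{t+1}=P_{\mathbf{Q}}[\boldsymbol{y}_t+\eta\boldsymbol{g}_t]$ is in hand, the remainder is the textbook projected-subgradient averaging analysis, and the only thing needing attention there is that the $\boldsymbol{y}_t$–$\boldsymbol{x}^{adv}_t$ discrepancy does not spoil the telescoping — precisely what the geometric decay from $\mu<1$ in the drift bound guarantees.
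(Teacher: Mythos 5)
Your proposal is correct and reaches the same $O(1/\sqrt{T})$ bound, but the second half takes a genuinely different route from the paper. The reformulation step is handled identically in spirit: the paper writes the variational inequality of Lemma \ref{lem:usefullemma} for $\boldsymbol{x}^{adv}_{t+1}$, divides the first factor by $1-\mu$ to recognize $\boldsymbol{y}_{t+1}-\boldsymbol{y}_t-\frac{\alpha}{1-\mu}\boldsymbol{g}_t$, adds the special instance $\boldsymbol{x}^{adv}=\boldsymbol{x}^{adv}_t$ scaled by $\frac{\mu}{1-\mu}$ to replace $\boldsymbol{x}^{adv}_{t+1}-\boldsymbol{x}^{adv}$ by $\boldsymbol{y}_{t+1}-\boldsymbol{x}^{adv}$ in the second factor, and invokes the converse direction of the lemma — a two-line argument you could adopt verbatim rather than leaving this as a sketch. (One caveat you should note: the converse direction of Lemma \ref{lem:usefullemma} requires $\boldsymbol{y}_{t+1}\in\mathbf{Q}$, which is not obvious since $\mathbf{p}_{t+1}\neq 0$; however, the inequality $\langle \boldsymbol{y}_t+\eta\boldsymbol{g}_t-\boldsymbol{y}_{t+1},\boldsymbol{u}-\boldsymbol{y}_{t+1}\rangle\leq 0$ for $\boldsymbol{u}\in\mathbf{Q}$ is already established at that point and is all that either proof actually uses, so your one-step inequality stands without the projection identity.) Where you diverge is the treatment of the momentum cross term: the paper decomposes $\boldsymbol{x}^{\ast}-\boldsymbol{y}_t$ into $(\boldsymbol{x}^{\ast}-\boldsymbol{x}^{adv}_t)-\mathbf{p}_t$ and applies concavity a second time to $\langle\boldsymbol{g}_t,\boldsymbol{x}^{adv}_{t-1}-\boldsymbol{x}^{adv}_t\rangle$, producing an exactly telescoping correction $\frac{\mu}{1-\mu}[J(\boldsymbol{x}^{adv}_t)-J(\boldsymbol{x}^{adv}_{t-1})]$ that contributes only $O(1/T)$ after averaging; you instead bound the drift $\langle\boldsymbol{g}_t,\boldsymbol{x}^{adv}_t-\boldsymbol{y}_{t+1}\rangle$ uniformly by $c\alpha$ via Cauchy--Schwarz and the geometric decay $\|\boldsymbol{x}^{adv}_{t+1}-\boldsymbol{x}^{adv}_t\|\leq\frac{\alpha G}{1-\mu}$, paying an additive $cC\sqrt{T}$ in the sum. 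Your version needs only the gradient bound (not a second application of concavity) and so is slightly more robust, at the cost of a worse constant; the paper's version keeps the momentum contribution at the higher order $O(1/T)$. Both correctly deliver the stated rate, and your telescoping, Jensen step, and use of boundedness of $\mathbf{Q}$ match the paper's.
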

\begin{proof}
 According to the iteration (\ref{algorithm1}) and Lemma \ref {lem:usefullemma}, $\forall \boldsymbol{x}^{adv} \in \mathbf{Q}$, we have
 \begin{gather*}
   \langle \boldsymbol{x}^{adv}_{t+1}-\boldsymbol{x}^{adv}_{t}-\mu(\boldsymbol{x}^{adv}_{t}-\boldsymbol{x}^{adv}_{t-1})-\alpha\boldsymbol{g}_t, \boldsymbol{x}^{adv}_{t+1}-\boldsymbol{x}^{adv}\rangle \leq 0.
 \end{gather*}
 This is
  \begin{gather*}
   \langle \frac{1}{1-\mu}(\boldsymbol{x}^{adv}_{t+1}-\boldsymbol{x}^{adv}_{t})-\mathbf{p}_{t}-\frac{\alpha}{1-\mu}\boldsymbol{g}_t, \boldsymbol{x}^{adv}_{t+1}-\boldsymbol{x}^{adv}\rangle \leq 0,
 \end{gather*}
 i.e.,
  \begin{equation}\label{a}
   \langle \boldsymbol{x}^{adv}_{t+1}+\mathbf{p}_{t+1}-(\boldsymbol{x}^{adv}_{t}+\mathbf{p}_{t})-\frac{\alpha}{1-\mu}\boldsymbol{g}_t, \boldsymbol{x}^{adv}_{t+1}-\boldsymbol{x}^{adv}\rangle \leq 0.
 \end{equation}
 Specifically when $\boldsymbol{x}^{adv}=\boldsymbol{x}^{adv}_t$,
 \begin{equation}\label{b}
   \langle \boldsymbol{x}^{adv}_{t+1}+\mathbf{p}_{t+1}-(\boldsymbol{x}^{adv}_{t}+\mathbf{p}_{t})-\frac{\alpha}{1-\mu}\boldsymbol{g}_t, \frac{\mu(\boldsymbol{x}^{adv}_{t+1}-\boldsymbol{x}^{adv}_t)}{1-\mu}\rangle \leq 0.
 \end{equation}
 From (\ref{a}) and (\ref{b}), we have,
  \begin{equation*}
   \langle \boldsymbol{x}^{adv}_{t+1}+\mathbf{p}_{t+1}-(\boldsymbol{x}^{adv}_{t}+\mathbf{p}_{t})-\frac{\alpha}{1-\mu}\boldsymbol{g}_t, \boldsymbol{x}^{adv}_{t+1}+\mathbf{p}_{t+1}-\boldsymbol{x}^{adv}\rangle \leq 0.
 \end{equation*}
 Using Lemma \ref{lem:usefullemma}, we have
  \begin{equation*}
   \boldsymbol{x}^{adv}_{t+1}+\mathbf{p}_{t+1}=P_{\mathbf{Q}}[\boldsymbol{x}^{adv}_t+\mathbf{p}_{t}+\frac{\alpha}{1-\mu}\boldsymbol{g}_t].
 \end{equation*}
 Then
 \begin{equation*}
   \begin{aligned}
   \|\boldsymbol{x}^{\ast}-(\boldsymbol{x}^{adv}_{t+1}+\mathbf{p}_{t+1}) \|^2 &\leq \|\boldsymbol{x}^{\ast}-(\boldsymbol{x}^{adv}_{t}+\mathbf{p}_{t}) -\frac{\alpha}{1-\mu}\boldsymbol{g}_t\|^2 \\
   & = \|\boldsymbol{x}^{\ast}-(\boldsymbol{x}^{adv}_{t}+\mathbf{p}_{t})\|^2 + \| \frac{\alpha}{1-\mu}\boldsymbol{g}_t\|^2 - 2\langle \frac{\alpha}{1-\mu}\boldsymbol{g}_t , \boldsymbol{x}^{\ast}-\boldsymbol{x}^{adv}_{t}\rangle \\
   & - 2\langle \frac{\alpha \mu}{(1-\mu)^2}\boldsymbol{g}_t , \boldsymbol{x}^{adv}_{t-1}-\boldsymbol{x}^{adv}_{t}\rangle.\\
   \end{aligned}
 \end{equation*}
 According to the properties of concave function,
 \begin{equation*}
   \langle \boldsymbol{g}_t , \boldsymbol{x}^{\ast}-\boldsymbol{x}^{adv}_{t}\rangle \geq J(\boldsymbol{x}^{\ast})-J(\boldsymbol{x}^{adv}_{t}) \ and
   \ \langle \boldsymbol{g}_t,  \boldsymbol{x}^{adv}_{t-1}-\boldsymbol{x}^{adv}_{t}\rangle \geq J(\boldsymbol{x}^{adv}_{t-1})-J(\boldsymbol{x}^{adv}_{t}).
 \end{equation*}
 Then
 \begin{equation*}
   \begin{aligned}
   \|\boldsymbol{x}^{\ast}-(\boldsymbol{x}^{adv}_{t+1}+\mathbf{p}_{t+1}) \|^2 & \leq \|\boldsymbol{x}^{\ast}-(\boldsymbol{x}^{adv}_{t}+\mathbf{p}_{t})\|^2 + \frac{\alpha^2}{(1-\mu)^2}\|\boldsymbol{g}_t\|^2 \\
   & + \frac{2\alpha}{1-\mu}[J(\boldsymbol{x}^{adv}_{t})-J(\boldsymbol{x}^{\ast})] + \frac{2\alpha \mu}{(1-\mu)^2}[J(\boldsymbol{x}^{adv}_{t})-J(\boldsymbol{x}^{adv}_{t-1})].\\
   \end{aligned}
 \end{equation*}
Rearrange the inequality, we have
 \begin{equation*}
   \begin{aligned}
    \frac{2\alpha}{1-\mu}[J(\boldsymbol{x}^{\ast})-J(\boldsymbol{x}^{adv}_{t})] & \leq \frac{2\alpha \mu}{(1-\mu)^2}[J(\boldsymbol{x}^{adv}_{t})-J(\boldsymbol{x}^{adv}_{t-1})] +\| \boldsymbol{x}^{\ast}-(\boldsymbol{x}^{adv}_{t}+\mathbf{p}_{t})\|^2\\
   & - \|\boldsymbol{x}^{\ast}-(\boldsymbol{x}^{adv}_{t+1}+\mathbf{p}_{t+1}) \|^2 + \frac{\alpha^2}{(1-\mu)^2}\|\boldsymbol{g}_t\|^2,\\
   \end{aligned}
 \end{equation*}
 i.e.,
 \begin{equation*}
   \begin{aligned}
    J(\boldsymbol{x}^{\ast})-J(\boldsymbol{x}^{adv}_{t}) & \leq \frac{\mu}{1-\mu}[J(\boldsymbol{x}^{adv}_{t})-J(\boldsymbol{x}^{adv}_{t-1})] + \frac{1-\mu}{2\alpha}[\|\boldsymbol{x}^{\ast}-(\boldsymbol{x}^{adv}_{t}+\mathbf{p}_{t})\|^2\\
   & - \|\boldsymbol{x}^{\ast}-(\boldsymbol{x}^{adv}_{t+1}+\mathbf{p}_{t+1}) \|^2] + \frac{\alpha}{2(1-\mu)}\|\boldsymbol{g}_t\|^2.\\
   \end{aligned}
 \end{equation*}
 Summing this inequality from $t = 1$ to $T$, we obtain
  \begin{equation*}
   \begin{aligned}
     \sum_{t=1}^{T}[J(\boldsymbol{x}^{\ast})-J(\boldsymbol{x}^{adv}_{t})] & \leq \frac{\mu}{1-\mu}[J(\boldsymbol{x}^{adv}_{T})-J(\boldsymbol{x}^{adv}_{0})] + \frac{1-\mu}{2\alpha}[\|\boldsymbol{x}^{\ast}-(\boldsymbol{x}^{adv}_{1}+\mathbf{p}_{1})\|^2\\
   & - \|\boldsymbol{x}^{\ast}-(\boldsymbol{x}^{adv}_{T+1}+\mathbf{p}_{T+1})\|^2] + \sum_{t=1}^{T}\frac{\alpha}{2(1-\mu)}\|\boldsymbol{g}_t\|^2,\\
   \end{aligned}
 \end{equation*}
 i.e.,
 \begin{equation}\label{c}
   \begin{aligned}
     \sum_{t=1}^{T}[J(\boldsymbol{x}^{\ast})-J(\boldsymbol{x}^{adv}_{t})] & \leq \frac{\mu}{1-\mu}[J(\boldsymbol{x}^{adv}_{T})-J(\boldsymbol{x}^{adv}_{0})] + \frac{(1-\mu)\sqrt{T}}{2C}\|\boldsymbol{x}^{\ast}-(\boldsymbol{x}^{adv}_{1}+\mathbf{p}_{1})\|^2\\
   & + \frac{C}{2(1-\mu)\sqrt{T}}\sum_{t=1}^{T}\|\boldsymbol{g}_t\|^2.\\
   \end{aligned}
 \end{equation}
 Note
  \begin{equation}\label{d}
   \begin{aligned}
     \sum_{t=1}^{T}\|\boldsymbol{g}_t\|^2\leq TG^2.
   \end{aligned}
 \end{equation}
 Since $\mathbf{Q}$ is a bounded set, there exists a positive number $M > 0$ such that
  \begin{equation}\label{e}
   \begin{aligned}
     \|\boldsymbol{x}^{\ast}-(\boldsymbol{x}^{adv}_{t+1}+\mathbf{p}_{t+1}) \|^2 \leq M, \forall t \geq 0
   \end{aligned}
 \end{equation}
 From (\ref{c}), (\ref{d}) and (\ref{e}), we have
 \begin{equation*}
   \begin{aligned}
     \sum_{t=1}^{T}[J(\boldsymbol{x}^{\ast})-J(\boldsymbol{x}^{adv}_{t})] & \leq \frac{\mu}{1-\mu}[J(\boldsymbol{x}^{adv}_{T})-J(\boldsymbol{x}^{adv}_{0})] + \frac{(1-\mu)\sqrt{T}M}{2C} + \frac{C \sqrt{T} G^2}{2(1-\mu)}.
   \end{aligned}
 \end{equation*}
 Using the concavity of $J(\boldsymbol{x}^{adv})$, we obtain
 \begin{equation*}
   \begin{aligned}
     J(\boldsymbol{x}^{\ast})-J(\frac{1}{T}\sum_{t=1}^{T}\boldsymbol{x}^{adv}_{t}) & \leq \frac{\mu}{(1-\mu)T}[J(\boldsymbol{x}^{adv}_{T})-J(\boldsymbol{x}^{adv}_{0})] + \frac{(1-\mu)M}{2C \sqrt{T}} + \frac{CG^2}{2(1-\mu)\sqrt{T}}.
   \end{aligned}
 \end{equation*}
\end{proof}
This completes the proof of Theorem \ref{thm:bigtheorem}.

Note that our AdaMI-FGM (\ref{algorithm2}) uses the same adaptive step-size strategy as the AdaGrad algorithm in Mukkamala \& Hein (2017). In Mukkamala \& Hein (2017), the regret analysis in an online setting is provided. To analyze convergence of our AdaMI-FGM (\ref{algorithm2}) with a constant learning rate, we first introduce an important lemma from Mukkamala \& Hein (2017). In the following, for the diagonal matrix sequence $\{V_t\}^T_{t=1}$, we use $v_{t,i}$ to denote the $i$-th element in the diagonal of $V_t$, $g_{t,i}$ to denote the $i$-th element of $\boldsymbol{g}_t$ and $\boldsymbol{g}_{1:T, i}=[g_{1,i}, \cdots, g_{T,i}]$ to denote the vector obtained by concatenating the $i$-th element of the gradient sequence $\{\boldsymbol{g}_t\}^T_{t=1}$.

\begin{lemma}
\label{lem:usefullemma1}
(Mukkamala \& Hein, 2017) Suppose that $1-\frac{1}{t} \leq \beta_t \leq 1-\frac{\gamma}{t}$ for some $0<\gamma\leq 1$ and each $t\geq 1$ in AdaMI-FGM (\ref{algorithm2}). Then
 \begin{equation*}
   \begin{aligned}
     \sum_{i=1}^{d}\sum_{t=1}^{T}\frac{g^2_{t,i}}{\sqrt{tv_{t,i}}+\delta} \leq \sum_{i=1}^{d}\frac{2(2-\gamma)}{\gamma}(\sqrt{Tv_{T,i}}+\delta)
   \end{aligned}
 \end{equation*}
\end{lemma}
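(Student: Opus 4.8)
The plan is to prove the bound one coordinate at a time and then sum over $i=1,\dots,d$, so fix a coordinate $i$ and, to lighten notation, drop the subscript and write $v_t,g_t$ for $v_{t,i},g_{t,i}$. The first step is to turn the EMA recursion $v_t=\beta_t v_{t-1}+(1-\beta_t)g_t^2$ (with $v_0=0$) from AdaMI-FGM~(\ref{algorithm2}) into a clean inequality for the ``effective'' accumulator $t v_t$. Writing $t v_t = t\beta_t v_{t-1} + t(1-\beta_t)g_t^2$ and using the lower bound $\beta_t\ge 1-\tfrac1t$ (so $t\beta_t\ge t-1$) together with the upper bound $\beta_t\le 1-\tfrac{\gamma}{t}$ (so $t(1-\beta_t)\ge\gamma$) gives
$$ t v_t \;\ge\; (t-1)v_{t-1} + \gamma\, g_t^2 \qquad (t\ge1). $$
In particular $t v_t\ge (t-1)v_{t-1}$, so $t\mapsto t v_t$ is nondecreasing, and $\gamma g_t^2 \le t v_t-(t-1)v_{t-1}$; these two consequences are all that is needed below.

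The second step is an induction on $T$ establishing, for each fixed coordinate,
$$ \sum_{t=1}^{T}\frac{g_t^2}{\sqrt{t v_t}+\delta}\;\le\;\frac{2(2-\gamma)}{\gamma}\bigl(\sqrt{T v_T}+\delta\bigr). $$
For $T=1$: since $v_1=(1-\beta_1)g_1^2\ge\gamma g_1^2$, we have $g_1^2\le v_1/\gamma$, so the left side is at most $\tfrac1\gamma\sqrt{v_1}$, which is dominated by $\tfrac{2(2-\gamma)}{\gamma}(\sqrt{v_1}+\delta)$ because $2(2-\gamma)\ge 2$ when $0<\gamma\le1$. For the inductive step, add the $t=T$ summand to the bound for $T-1$; the $\delta$ terms cancel and it suffices to show
$$ \frac{g_T^2}{\sqrt{T v_T}+\delta}\;\le\;\frac{2(2-\gamma)}{\gamma}\Bigl(\sqrt{T v_T}-\sqrt{(T-1)v_{T-1}}\Bigr). $$
Rationalizing and using the step-one inequality in the numerator and $\sqrt{(T-1)v_{T-1}}\le\sqrt{T v_T}$ in the denominator,
$$ \sqrt{T v_T}-\sqrt{(T-1)v_{T-1}}=\frac{T v_T-(T-1)v_{T-1}}{\sqrt{T v_T}+\sqrt{(T-1)v_{T-1}}}\;\ge\;\frac{\gamma g_T^2}{2\sqrt{T v_T}}, $$
so the claim reduces to $\sqrt{T v_T}\le(2-\gamma)(\sqrt{T v_T}+\delta)$, i.e. $(1-\gamma)\sqrt{T v_T}+(2-\gamma)\delta\ge0$, which holds since $0<\gamma\le1$ and $\delta>0$. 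Summing the per-coordinate inequality over $i=1,\dots,d$ then gives the statement of Lemma~\ref{lem:usefullemma1}.

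I expect the only delicate point to be the bookkeeping of the additive constant $\delta$ inside the square roots: it has to be carried through both the base case and the telescoping cleanly so that the final constant is exactly $2(2-\gamma)/\gamma$, and one must be careful to use each side of the hypothesis $1-\tfrac1t\le\beta_t\le1-\tfrac{\gamma}{t}$ in the correct direction — the lower bound to make $t v_t$ monotone, the upper bound to keep the factor $\gamma$ in the recursion. Everything else is elementary algebra, essentially the classical AdaGrad telescoping argument (which is the case $\gamma=1$).
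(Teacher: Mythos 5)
Your proof is correct. Note that the paper itself does not prove this lemma---it is imported verbatim from Mukkamala \& Hein (2017) as an external result and used as a black box in the proof of Theorem \ref{thm:bigtheorem1}---so there is no in-paper argument to compare against; what you have written is essentially the standard telescoping argument from that reference. The key steps all check out: the recursion $tv_{t,i}=t\beta_t v_{t-1,i}+t(1-\beta_t)g_{t,i}^2$ combined with $t\beta_t\ge t-1$ and $t(1-\beta_t)\ge\gamma$ gives $tv_{t,i}\ge(t-1)v_{t-1,i}+\gamma g_{t,i}^2$, and the rationalization $\sqrt{tv_{t,i}}-\sqrt{(t-1)v_{t-1,i}}\ge \gamma g_{t,i}^2/(2\sqrt{tv_{t,i}})$ together with $\sqrt{tv_{t,i}}\le(2-\gamma)(\sqrt{tv_{t,i}}+\delta)$ closes the induction with the stated constant $2(2-\gamma)/\gamma$. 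The only cosmetic gaps are the degenerate cases where $tv_{t,i}=0$ (all gradients in that coordinate vanish up to time $t$), where the quantities you divide by are zero; these are handled trivially since both sides of the relevant inequalities vanish, but a one-line remark to that effect would make the argument airtight. Your closing observation about which side of the hypothesis $1-\tfrac1t\le\beta_t\le 1-\tfrac{\gamma}{t}$ is used where is accurate and is exactly the right way to think about why the lemma interpolates between the AdaGrad case ($\gamma=1$) and more aggressive exponential averaging.
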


\begin{theorem}
\label{thm:bigtheorem1}
Let $\boldsymbol{x}^{\ast} $ be a solution of problem (\ref{adv-optimization}). Assume that $\mathbf{Q}$ is bounded and $0 \leq \mu <1 $. Let $\beta_t = 1-\frac{\gamma}{t}$ with some $ 0 < \gamma \leq 1$. Let $\{ \boldsymbol{x}^{adv}_t\}_{t=1}^{\infty} $ be generated by AdaMI-FGM (\ref{algorithm2}).
Set\\
\begin{gather*}
  \mathbf{p}_t=\frac{\mu}{1-\mu}(\boldsymbol{x}^{adv}_t-\boldsymbol{x}^{adv}_{t-1}) \ and \ \alpha \equiv \frac{C}{\sqrt{T}}.
\end{gather*}
Then we have
$$
  \boldsymbol{x}^{adv}_{t+1}+\mathbf{p}_{t+1}=P_{\mathbf{Q}}[\boldsymbol{x}^{adv}_t+\mathbf{p}_{t}+\frac{\alpha \hat{V}_t^{-1}}{1-\mu}\boldsymbol{g}_t]$$
  and
  $$J(\boldsymbol{x}^\ast)-J(\boldsymbol{\bar{x}}_T^{adv}) \leq O(\frac{\sum_{i=1}^{d}\|\boldsymbol{g}_{1:T,i}\|}{T}),$$
where $\boldsymbol{\bar{x}}_T^{adv} = \frac{1}{T}\sum_{t=1}^{T}\boldsymbol{x}^{adv}_t$. This convergence rate can be considerably better than the convergence rate $O(\frac{d G_{\infty}}{\sqrt{T}})$  of SGD when $\sum_{i=1}^{d}\| \boldsymbol{g}_{1:T,i}\| \ll d\sqrt{T}$.
\end{theorem}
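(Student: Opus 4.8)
The plan is to adapt the argument of Theorem~\ref{thm:bigtheorem} to the adaptive setting, keeping the momentum manipulation of that proof but replacing its fixed Euclidean geometry by the time‑varying geometry induced by $\hat V_t$, and then closing the estimate with Lemma~\ref{lem:usefullemma1}. First I would reduce the two‑parameter recursion~(\ref{algorithm2}) to a single projected recursion in the auxiliary sequence $\boldsymbol y_t:=\boldsymbol x^{adv}_t+\mathbf p_t$. Exactly as in inequalities (a)--(b) of the proof of Theorem~\ref{thm:bigtheorem}, I would apply the projection characterization (Lemma~\ref{lem:usefullemma}, read in the metric of $\hat V_t$ so as to match the projection in~(\ref{algorithm2})) to~(\ref{algorithm2}) first with an arbitrary test point $\boldsymbol x^{adv}\in\mathbf Q$ and then with $\boldsymbol x^{adv}=\boldsymbol x^{adv}_t$; combining the two variational inequalities (the second being the first specialized and rescaled by $\tfrac{\mu}{1-\mu}>0$) makes the $\mu$‑weighted cross terms collapse so that the characterization applies to $\boldsymbol y_{t+1}$, yielding $\boldsymbol y_{t+1}=P_{\mathbf Q}[\boldsymbol y_t+\tfrac{\alpha\hat V_t^{-1}}{1-\mu}\boldsymbol g_t]$, which is the first displayed assertion.

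For the rate I would work throughout in the $\hat V_t$‑weighted inner product $\langle\boldsymbol a,\boldsymbol b\rangle_{\hat V_t}=\boldsymbol a^{\top}\hat V_t\boldsymbol b$ (equivalently, reading $P_{\mathbf Q}$ as the projection in that metric, as is standard for AdaGrad‑type analyses, cf.~\cite{wang2019sadam}). Nonexpansiveness of the projection gives $\|\boldsymbol x^\ast-\boldsymbol y_{t+1}\|_{\hat V_t}^2\le\|\boldsymbol x^\ast-\boldsymbol y_t-\tfrac{\alpha\hat V_t^{-1}}{1-\mu}\boldsymbol g_t\|_{\hat V_t}^2$; expanding, the $\hat V_t$ annihilates the $\hat V_t^{-1}$ in the cross term, leaving $\langle\boldsymbol g_t,\boldsymbol x^\ast-\boldsymbol y_t\rangle$, which I split into $\langle\boldsymbol g_t,\boldsymbol x^\ast-\boldsymbol x^{adv}_t\rangle$ and an $\mathbf p_t$‑part. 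Concavity of $J$ bounds the first by $J(\boldsymbol x^\ast)-J(\boldsymbol x^{adv}_t)$, and, exactly as in Theorem~\ref{thm:bigtheorem}, the $\mathbf p_t$‑part contributes the telescoping momentum term $\tfrac{\mu}{1-\mu}[J(\boldsymbol x^{adv}_t)-J(\boldsymbol x^{adv}_{t-1})]$; the quadratic term reduces to $\tfrac{\alpha^2}{(1-\mu)^2}\boldsymbol g_t^{\top}\hat V_t^{-1}\boldsymbol g_t$.

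Summing $t=1,\dots,T$ leaves three sums to control. The momentum sum telescopes to $\tfrac{\mu}{1-\mu}[J(\boldsymbol x^{adv}_T)-J(\boldsymbol x^{adv}_0)]=O(1)$. The distance sum no longer collapses because the metric drifts, so I would rewrite $\sum_t\big(\|\boldsymbol x^\ast-\boldsymbol y_t\|_{\hat V_t}^2-\|\boldsymbol x^\ast-\boldsymbol y_{t+1}\|_{\hat V_t}^2\big)$ as $\|\boldsymbol x^\ast-\boldsymbol y_1\|_{\hat V_1}^2+\sum_{t\ge2}\|\boldsymbol x^\ast-\boldsymbol y_t\|_{\hat V_t-\hat V_{t-1}}^2$; since $\mathbf Q$ is bounded (hence $\boldsymbol y_t$, and the gap to $\boldsymbol x^\ast$, are bounded by some $M$) and the accumulated curvature is monotone, this is $O\big(M\sum_{i=1}^d(\sqrt{Tv_{T,i}}+\delta)\big)$. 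The residual sum $\sum_t\boldsymbol g_t^{\top}\hat V_t^{-1}\boldsymbol g_t$ is exactly what Lemma~\ref{lem:usefullemma1} controls --- its hypothesis $1-\tfrac1t\le\beta_t\le1-\tfrac\gamma t$ holds because $\beta_t=1-\tfrac\gamma t$ and $\gamma\le1$ --- again by $O\big(\sum_{i=1}^d(\sqrt{Tv_{T,i}}+\delta)\big)$. Collecting the three, dividing by $T$, applying concavity (Jensen) on the left to pass from $\tfrac1T\sum_t J(\boldsymbol x^{adv}_t)$ to $J(\boldsymbol{\bar x}^{adv}_T)$, and substituting $\alpha=C/\sqrt T$ gives $J(\boldsymbol x^\ast)-J(\boldsymbol{\bar x}^{adv}_T)=O\big(\tfrac1T\sum_i(\sqrt{Tv_{T,i}}+\delta)\big)$. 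Bounding $\sqrt{Tv_{T,i}}\le\|\boldsymbol g_{1:T,i}\|$ --- an equality for the arithmetic average~(\ref{arithmetic average}) and the analogous estimate of Mukkamala \& Hein for the EMA variant --- yields the stated $O\big(\sum_i\|\boldsymbol g_{1:T,i}\|/T\big)$; and since $\|\boldsymbol g_{1:T,i}\|\le\sqrt T\,G_\infty$ this is never worse than $O(dG_\infty/\sqrt T)$ and strictly better once $\sum_i\|\boldsymbol g_{1:T,i}\|\ll d\sqrt T$.

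The main obstacle is the bookkeeping forced by the drifting metric: one must choose the weighted norm so that the $\hat V_t^{-1}$ in the step is exactly annihilated in the cross term \emph{and} so that the momentum correction $\mathbf p_t$ still survives the manipulation (in Theorem~\ref{thm:bigtheorem} both were carried out in one fixed Euclidean norm), and then recover a usable bound for the no‑longer‑telescoping distance sum via monotonicity of the accumulated curvature and boundedness of $\mathbf Q$. A related point of care is aligning the powers of $t$ among the constant step‑size $\alpha=C/\sqrt T$, the damping factor $\tfrac{\delta}{\sqrt t}I_d$ inside $\hat V_t$, and the $\sqrt{tv_{t,i}}$ scaling in Lemma~\ref{lem:usefullemma1}, so that the final bound carries the $1/T$ factor rather than $1/\sqrt T$.
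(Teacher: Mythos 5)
Your proposal follows essentially the same route as the paper's proof: the same reduction to the auxiliary sequence $\boldsymbol{x}^{adv}_t+\mathbf{p}_t$ via two applications of Lemma~\ref{lem:usefullemma}, the same expansion in the $\hat V_t$-weighted norm with concavity handling both inner-product terms, the same treatment of the drifting metric via the telescoping $(\hat v_{t,i}-\hat v_{t-1,i})$ correction under boundedness of $\mathbf{Q}$, and the same use of Lemma~\ref{lem:usefullemma1} together with $\sqrt{Tv_{T,i}}\le\|\boldsymbol{g}_{1:T,i}\|$ to close the bound. The points of care you flag (annihilation of $\hat V_t^{-1}$ in the cross term, and aligning the $\sqrt{t}$ versus $\sqrt{T}$ factors so the final rate carries $1/T$) are exactly the steps the paper carries out.
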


\begin{proof}
 According to Lemma \ref{lem:usefullemma}, $\forall \boldsymbol{x}^{adv} \in \mathbf{Q}$, we have
 \begin{gather*}
   \langle \boldsymbol{x}^{adv}_{t+1}-\boldsymbol{x}^{adv}_{t}-\mu(\boldsymbol{x}^{adv}_{t}-\boldsymbol{x}^{adv}_{t-1})-\alpha \hat{V}_t^{-1} \boldsymbol{g}_t, \boldsymbol{x}^{adv}_{t+1}-\boldsymbol{x}^{adv}\rangle \leq 0.
 \end{gather*}
 This is
  \begin{gather*}
   \langle \frac{1}{1-\mu}(\boldsymbol{x}^{adv}_{t+1}-\boldsymbol{x}^{adv}_{t})-\mathbf{p}_{t}-\frac{\alpha \hat{V}_t^{-1}}{1-\mu}\boldsymbol{g}_t, \boldsymbol{x}^{adv}_{t+1}-\boldsymbol{x}^{adv}\rangle \leq 0,
 \end{gather*}
 i.e.,
  \begin{equation}\label{f}
   \langle \boldsymbol{x}^{adv}_{t+1}+\mathbf{p}_{t+1}-(\boldsymbol{x}^{adv}_{t}+\mathbf{p}_{t})-\frac{\alpha \hat{V}_t^{-1}}{1-\mu}\boldsymbol{g}_t, \boldsymbol{x}^{adv}_{t+1}-\boldsymbol{x}^{adv}\rangle \leq 0.
 \end{equation}
 Specifically,
 \begin{equation}\label{g}
   \langle \boldsymbol{x}^{adv}_{t+1}+\mathbf{p}_{t+1}-(\boldsymbol{x}^{adv}_{t}+\mathbf{p}_{t})-\frac{\alpha \hat{V}_t^{-1}}{1-\mu}\boldsymbol{g}_t, \frac{\mu(\boldsymbol{x}^{adv}_{t+1}-\boldsymbol{x}^{adv}_t)}{1-\mu}\rangle \leq 0.
 \end{equation}
 From (\ref{f}) and (\ref{g}), we have
  \begin{equation*}
   \langle \boldsymbol{x}^{adv}_{t+1}+\mathbf{p}_{t+1}-(\boldsymbol{x}^{adv}_{t}+\mathbf{p}_{t})-\frac{\alpha \hat{V}_t^{-1}}{1-\mu}\boldsymbol{g}_t, \boldsymbol{x}^{adv}_{t+1}+\mathbf{p}_{t+1}-\boldsymbol{x}^{adv}\rangle \leq 0.
 \end{equation*}
 Using Lemma \ref{lem:usefullemma}, we have
  \begin{equation*}
   \boldsymbol{x}^{adv}_{t+1}+\mathbf{p}_{t+1}=P_{\mathbf{Q}}[\boldsymbol{x}^{adv}_t+\mathbf{p}_{t}+\frac{\alpha \hat{V}_t^{-1}}{1-\mu}\boldsymbol{g}_t].
 \end{equation*}
 Then
 \begin{equation*}
   \begin{aligned}
   \|\boldsymbol{x}^{\ast}-(\boldsymbol{x}^{adv}_{t+1}+\mathbf{p}_{t+1}) \|^2_{\hat{V}_t} &\leq \|\boldsymbol{x}^{\ast}-(\boldsymbol{x}^{adv}_{t}+\mathbf{p}_{t}) -\frac{\alpha\hat{V}_t^{-1}}{1-\mu}\boldsymbol{g}_t\|^2_{\hat{V}_t} \\
   & = \|\boldsymbol{x}^{\ast}-(\boldsymbol{x}^{adv}_{t}+\mathbf{p}_{t})\|^2_{\hat{V}_t} + \| \frac{\alpha}{1-\mu}\boldsymbol{g}_t\|^2_{\hat{V}_t^{-1}} - 2\langle \frac{\alpha}{1-\mu}\boldsymbol{g}_t , \boldsymbol{x}^{\ast}-\boldsymbol{x}^{adv}_{t}\rangle \\
   & - 2\langle \frac{\alpha \mu}{(1-\mu)^2}\boldsymbol{g}_t , \boldsymbol{x}^{adv}_{t-1}-\boldsymbol{x}^{adv}_{t}\rangle.\\
   \end{aligned}
 \end{equation*}
 According to the properties of concave function,
 \begin{equation*}
   \langle \boldsymbol{g}_t , \boldsymbol{x}^{\ast}-\boldsymbol{x}^{adv}_{t}\rangle \geq J(\boldsymbol{x}^{\ast})-J(\boldsymbol{x}^{adv}_{t}), \langle \boldsymbol{g}_t , \boldsymbol{x}^{adv}_{t-1}-\boldsymbol{x}^{adv}_{t}\rangle \geq J(\boldsymbol{x}^{adv}_{t-1})-J(\boldsymbol{x}^{adv}_{t}).
 \end{equation*}
 Then
 \begin{equation*}
   \begin{aligned}
   \|\boldsymbol{x}^{\ast}-(\boldsymbol{x}^{adv}_{t+1}+\mathbf{p}_{t+1}) \|^2_{\hat{V}_t} & \leq \|\boldsymbol{x}^{\ast}-(\boldsymbol{x}^{adv}_{t}+\mathbf{p}_{t})\|^2_{\hat{V}_t} + \frac{\alpha^2}{(1-\mu)^2}\|\boldsymbol{g}_t\|^2_{\hat{V}_t^{-1}} \\
   & + \frac{2\alpha}{1-\mu}[J(\boldsymbol{x}^{adv}_{t})-J(\boldsymbol{x}^{\ast})] + \frac{2\alpha \mu}{(1-\mu)^2}[J(\boldsymbol{x}^{adv}_{t})-J(\boldsymbol{x}^{adv}_{t-1})].\\
   \end{aligned}
 \end{equation*}
Rearrange the inequality, we have
 \begin{equation*}
   \begin{aligned}
    \frac{2\alpha}{1-\mu}[J(\boldsymbol{x}^{\ast})-J(\boldsymbol{x}^{adv}_{t})] & \leq \frac{2\alpha \mu}{(1-\mu)^2}[J(\boldsymbol{x}^{adv}_{t})-J(\boldsymbol{x}^{adv}_{t-1})] +\| \boldsymbol{x}^{\ast}-(\boldsymbol{x}^{adv}_{t}+\mathbf{p}_{t})\|^2_{\hat{V}_t}\\
   & - \|\boldsymbol{x}^{\ast}-(\boldsymbol{x}^{adv}_{t+1}+\mathbf{p}_{t+1}) \|^2_{\hat{V}_t} + \frac{\alpha^2}{(1-\mu)^2}\|\boldsymbol{g}_t\|^2_{\hat{V}_t^{-1}},\\
   \end{aligned}
 \end{equation*}
 i.e.,
 \begin{equation*}
   \begin{aligned}
    J(\boldsymbol{x}^{\ast})-J(\boldsymbol{x}^{adv}_{t}) & \leq \frac{\mu}{1-\mu}[J(\boldsymbol{x}^{adv}_{t})-J(\boldsymbol{x}^{adv}_{t-1})] + \frac{1-\mu}{2\alpha}[\|\boldsymbol{x}^{\ast}-(\boldsymbol{x}^{adv}_{t}+\mathbf{p}_{t})\|^2_{\hat{V}_t}\\
   & - \|\boldsymbol{x}^{\ast}-(\boldsymbol{x}^{adv}_{t+1}+\mathbf{p}_{t+1}) \|^2_{\hat{V}_t}] + \frac{\alpha}{2(1-\mu)}\|\boldsymbol{g}_t\|^2_{\hat{V}_t^{-1}}.\\
   \end{aligned}
 \end{equation*}
 Summing this inequality from $t = 1$ to $T$, we obtain
  \begin{equation*}
   \begin{aligned}
     \sum_{t=1}^{T}[J(\boldsymbol{x}^{\ast})-J(\boldsymbol{x}^{adv}_{t})] & \leq \frac{\mu}{1-\mu}[J(\boldsymbol{x}^{adv}_{T})-J(\boldsymbol{x}^{adv}_{0})] + \frac{1-\mu}{2\alpha}[\|\boldsymbol{x}^{\ast}-(\boldsymbol{x}^{adv}_{1}+\mathbf{p}_{1})\|^2_{\hat{V}_1}\\
   & - \|\boldsymbol{x}^{\ast}-(\boldsymbol{x}^{adv}_{T+1}+\mathbf{p}_{T+1})\|^2_{\hat{V}_T}] + \frac{\alpha}{2(1-\mu)}\sum_{t=1}^{T}\|\boldsymbol{g}_t\|^2_{\hat{V}_t^{-1}}\\
   & + \frac{1-\mu}{2\alpha}\sum_{i=1}^{d} \sum_{t=2}^{T}[x^{\ast}_i-(x^{adv}_{t,i}+p_{t,i})]^2(\hat{v}_{t,i}-\hat{v}_{t-1,i}),
   \end{aligned}
 \end{equation*}
 i.e.,
 \begin{equation}\label{h}
   \begin{aligned}
     \sum_{t=1}^{T}[J(\boldsymbol{x}^{\ast})-J(\boldsymbol{x}^{adv}_{t})] & \leq \frac{\mu}{1-\mu}[J(\boldsymbol{x}^{adv}_{T})-J(\boldsymbol{x}^{adv}_{0})] + \frac{(1-\mu)\sqrt{T}}{2C}\|\boldsymbol{x}^{\ast}-(\boldsymbol{x}^{adv}_{1}+\mathbf{p}_{1})\|^2_{\hat{V}_1}\\
   & + \frac{C}{2(1-\mu)\sqrt{T}}\sum_{i=1}^{d}\sum_{t=1}^{T}\frac{\sqrt{t} g_{t,i}^2}{\sqrt{tv_{t,i}}+\delta} + \frac{(1-\mu)\sqrt{T}}{2C}\sum_{i=1}^{d} \sum_{t=2}^{T}[x^{\ast}_i-(x_{t,i}+p_{t,i})]^2(\hat{v}_{t,i}-\hat{v}_{t-1,i})\\
   & \leq \frac{\mu}{1-\mu}[J(\boldsymbol{x}^{adv}_{T})-J(\boldsymbol{x}^{adv}_{0})] + \frac{(1-\mu)\sqrt{T}}{2C}\sum_{i=1}^{d}[x^{\ast}_i-x_{t,i}+p_{t,i})]^2\hat{v}_{1,i}\\
   & + \frac{C}{2(1-\mu)}\sum_{i=1}^{d}\sum_{t=1}^{T}\frac{ g_{t,i}^2}{\sqrt{tv_{t,i}}+\delta} + \frac{(1-\mu)\sqrt{T}}{2C}\sum_{i=1}^{d} \sum_{t=2}^{T}[x^{\ast}_i-(x_{t,i}+p_{t,i})]^2(\hat{v}_{t,i}-\hat{v}_{t-1,i}).\\
   \end{aligned}
 \end{equation}
 Using Lemma \ref{lem:usefullemma1}, we have
  \begin{equation}\label{i}
   \begin{aligned}
     \frac{C}{2(1-\mu)}\sum_{i=1}^{d}\sum_{t=1}^{T}\frac{g^2_{t,i}}{\sqrt{tv_{t,i}}+\delta} \leq \frac{C}{2(1-\mu)}\sum_{i=1}^{d}\frac{2(2-\gamma)}{\gamma}(\sqrt{Tv_{T,i}}+\delta) =
     \frac{C(2-\gamma)}{\gamma(1-\mu)}\sum_{i=1}^{d}(\|\boldsymbol{g}_{1:T, i}\|+\delta).
   \end{aligned}
 \end{equation}
 Since $\mathbf{Q}$ is a bounded set, there exists a positive number $M_\infty > 0$ such that
  \begin{equation}\label{j}
   \begin{aligned}
     [x^{\ast}_i-(x_{t,i}+p_{t,i})]^2 \leq M_\infty, \ \forall t \geq 0, 1 \leq i \leq d.
   \end{aligned}
 \end{equation}
 From (\ref{h}), (\ref{i}), (\ref{j}) and Assumption 1, we have
 \begin{equation*}
   \begin{aligned}
     \sum_{t=1}^{T}[J(\boldsymbol{x}^{\ast})-J(\boldsymbol{x}^{adv}_{t})] & \leq \frac{\mu}{1-\mu}[J(\boldsymbol{x}^{adv}_{T})-J(\boldsymbol{x}^{adv}_{0})] + \frac{(1-\mu)\sqrt{T}M_\infty}{2C}\sum_{i=1}^{d}\hat{v}_{1,i} \\
     & + \frac{C(2-\gamma)}{\gamma(1-\mu)}\sum_{i=1}^{d}(\|\boldsymbol{g}_{1:T, i}\|+\delta)+\frac{(1-\mu)\sqrt{T}M_\infty}{2C}\sum_{i=1}^{d} \sum_{t=2}^{T}(\hat{v}_{t,i}-\hat{v}_{t-1,i})\\
     & = \frac{\mu}{1-\mu}[J(\boldsymbol{x}^{adv}_{T})-J(\boldsymbol{x}^{adv}_{0})] + \frac{(1-\mu)\sqrt{T}M_\infty}{2C}\sum_{i=1}^{d}\hat{v}_{1,i} \\
     & + \frac{C(2-\gamma)}{\gamma(1-\mu)}\sum_{i=1}^{d}(\|\boldsymbol{g}_{1:T, i}\|+\delta)+\frac{(1-\mu)\sqrt{T}M_\infty}{2C}\sum_{i=1}^{d} (\hat{v}_{T,i}-\hat{v}_{1,i})\\
     & = \frac{\mu}{1-\mu}[J(\boldsymbol{x}^{adv}_{T})-J(\boldsymbol{x}^{adv}_{0})] + \frac{C(2-\gamma)}{\gamma(1-\mu)}\sum_{i=1}^{d}(\|\boldsymbol{g}_{1:T, i}\|+\delta)+\frac{(1-\mu)\sqrt{T}M_\infty}{2C}\sum_{i=1}^{d} \hat{v}_{T,i}\\
     & = \frac{\mu}{1-\mu}[J(\boldsymbol{x}^{adv}_{T})-J(\boldsymbol{x}^{adv}_{0})] + \frac{C(2-\gamma)}{\gamma(1-\mu)}\sum_{i=1}^{d}(\|\boldsymbol{g}_{1:T, i}\|+\delta)\\
     & +\frac{(1-\mu)M_\infty}{2C}\sum_{i=1}^{d} (\|\boldsymbol{g}_{1:T, i}\|+\delta)\\
   \end{aligned}
 \end{equation*}
 By concavity of $J(\boldsymbol{x}^{adv})$, we obtain
 \begin{equation*}
   \begin{aligned}
     J(\boldsymbol{x}^{\ast})-J(\frac{1}{T}\sum_{t=1}^{T}\boldsymbol{x}^{adv}_{t}) & \leq \frac{\mu}{(1-\mu)T}[J(\boldsymbol{x}^{adv}_{T})-J(\boldsymbol{x}^{adv}_{0})]+\frac{C(2-\gamma)}{\gamma(1-\mu)T}\sum_{i=1}^{d}(\|\boldsymbol{g}_{1:T, i}\|+\delta)\\
     & +\frac{(1-\mu)M_\infty}{2CT}\sum_{i=1}^{d} (\|\boldsymbol{g}_{1:T, i}\|+\delta)
   \end{aligned}
 \end{equation*}
\end{proof}
This completes the proof of Theorem \ref{thm:bigtheorem1}.

\end{document}